\documentclass[]{article}
\usepackage{framed,multirow}


\usepackage{latexsym}

\usepackage{url}
\usepackage{xcolor}

\usepackage{hyperref}

\definecolor{newcolor}{rgb}{.8,.349,.1}

\usepackage{amssymb, amsmath, amsgen, amsfonts, amsthm,bm}

\usepackage{natbib}

\usepackage[utf8]{inputenc}
\usepackage[ruled,vlined]{algorithm2e}

\usepackage{booktabs}
\usepackage{soul}
\definecolor{LightGreen}{HTML}{B6FF00}
\sethlcolor{LightGreen}

\usepackage[caption=false]{subfig}
\usepackage{graphicx}
\graphicspath{{figures/}{figures/autoenc/}{figures/exp23/}{figures/segment/}}

\definecolor{newcolor}{rgb}{.8,.349,.1}

\newtheorem{theorem}{Theorem}[section]

\newcommand{\todo}[1]{}
\renewcommand{\todo}[1]{{\color{black} TODO: {#1}}}

\newcommand{\Var}{\mathrm{Var}}

\newcommand{\MVar}{\mathrm{MVar}}
\newcommand{\nn}{n \times n}

\def\lc{\left\lfloor}   
\def\rc{\right\rfloor}
\title{Adaptive Convolution Kernel for Artificial Neural Networks}
\author{F. Boray Tek, İlker Çam, Deniz Karlı}

\begin{document}

\maketitle

\begin{abstract}
Many deep neural networks are built by using stacked convolutional layers of fixed and single size (often 3$\times$3) kernels. This paper describes a method for training the size of convolutional kernels to provide varying size kernels in a single layer. The method utilizes a differentiable, and therefore backpropagation-trainable Gaussian envelope which can grow or shrink in a base grid. Our experiments compared the proposed adaptive layers to ordinary convolution layers in a simple two-layer network, a deeper residual network, and a U-Net architecture. The results in the popular image classification datasets such as MNIST, MNIST-CLUTTERED, CIFAR-10, Fashion, and “Faces in the Wild” showed that the adaptive kernels can provide statistically significant improvements on ordinary convolution kernels. A segmentation experiment in the Oxford-Pets dataset demonstrated that replacing a single ordinary convolution layer in a U-shaped network with a single 7$\times$7 adaptive layer can improve its learning performance and ability to generalize.
\end{abstract}

\section{Introduction}\label{Int}
Neural network-based pattern recognition is the state-of-the-art approach to solving many visual problems. The most successful solutions are based on stacked convolutional layers \cite{Krizhevsky2017, iandola2016, resnet}. The stacked deep hierarchy allows increasingly complex and discriminative representations (features) which also become easier to classify. Though biological neurons are functionally different, there is firm evidence that biological neurons in the visual cortex perform in a similar way to neurons in convolutional layers \cite{Poggio_2013}. In the late 1960s, Hubel and Wiesel \cite{Hubel_1962} discovered three types of cells in the visual cortex: simple, complex, and hyper-complex (i.e.\ end-stopped cells). The simple cells are sensitive to the orientation of the excitatory input, whereas the hyper-complex cells are activated by particular types of orientation, motion, and \emph{size} of the stimuli.

The common convolutional layer in a neural network is composed of several fixed-size convolution kernels with trainable/learnable weights (coefficients) \cite{lecun1998a, goodfellow2016}. There are two important properties of a convolutional neuron which differentiates it from a fully connected neuron: 1) it has a local receptive field. 2) it shares its weights with all other neurons at the same layer (assuming a single kernel). Therefore, the same local (non)linear transformation is applied to all regions of the input. Thus, it calculates the same transformation for an input window regardless of its position in the image. However, it is neither scale- nor rotation-invariant, and the size, shape, or orientation of the kernel also affect the output. Though many practitioners often employ basic 3$\times$3 kernels for all tasks, others have tried varied size and shape kernels and different input samplings to improve robustness \cite{li_2017, dai_2017, Ronneberger_15, jeon_2017}. These works are reviewed in Section \ref{relatedadaptive}.

In this study, we describe a new and adaptive model of the convolution layer where the kernel sizes are learned during training. In this unique setting, a single convolution layer can tune and accommodate several kernel sizes at the same time. Such a layer can compute a multi-scale representation from the same input. This is achieved by an additional function which limits and controls the size of the kernel (illustrated in Figure \ref{fig:envelope}). Therefore, the first important question of this paper is: can a differentiable and trainable functional form effectively control the receptive field of a kernel? We tested this ability on an auto-encoder network to learn ordinary image processing operators (e.g., Sobel filter, Gaussian blur). The second question is whether the new adaptively sized convolution kernels can provide any advantage over ordinary fixed-size kernels. In two different network structures, (simple CNN and residual) we substitute the ordinary convolution layers with the adaptive layers to compare their learning and generalization performances. We used the popular MNIST, MNIST-Cluttered, CIFAR-10, Fashion, LFW-Faces (``Labelled Faces in the Wild'') datasets for the comparisons. Finally, we replaced a single convolution layer in a U-net architecture with an adaptive layer and tested it in segmentation.

The main contributions of the current paper are as follows: 
1) a formal description of the two-dimensional adaptive kernel model based on a Gaussian envelope function, 
2) a demonstration that the adaptive envelope makes the kernels less prone to overfitting than ordinary large kernels,
3) a demonstration that their performance is comparable to or better than the ordinary 3$\times$3 kernels which are commonly used in vision applications.

\begin{figure}
	\centering   
	\subfloat[Grid]{\label{fig1:a}\includegraphics[width=0.25\linewidth]{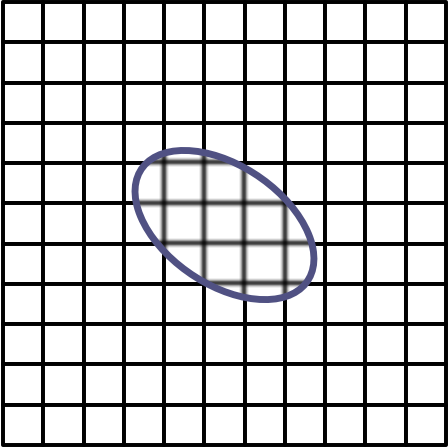}}
	\subfloat[$U$]{\label{fig1:b}\includegraphics[width=0.25\linewidth]{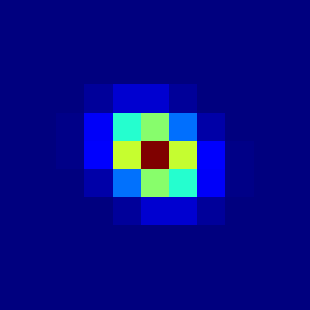}}
	\subfloat[$W$]{\label{fig1:c}\includegraphics[width=0.25\linewidth]{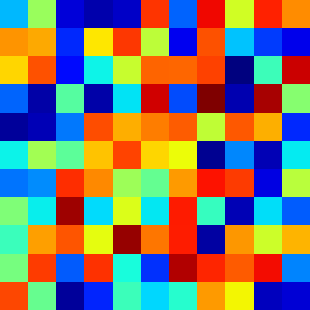}}
	\subfloat[$U\circ W$]{\label{fig1:d}\includegraphics[width=0.25\linewidth]{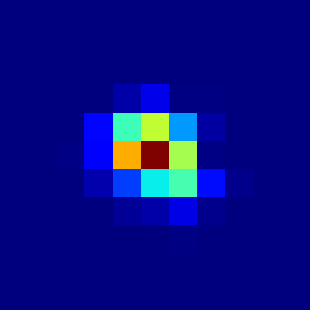}}	
	\caption{Illustration of the proposed weight envelope: (a) An arbitrary differentiable envelope function on a base grid. (b) An example Gaussian envelope ($U$). (c) Randomly generated weights ($W$). (d) Weights are masked by envelope (b) through element-wise multiplication ($U\circ W$).}
	\label{fig:envelope}
\end{figure}

\section{Related Works}\label{relatedadaptive}

The concept of receptive field has attracted attention since the earliest studies of artificial neural networks. The stacked topology of networks enables a neuron in the deeper layers of the network to have an enlarged effective field of view on the input. Recently Luo \cite{luo2016}, found that the effective receptive field grows with the square root of the depth, and in contrary to general belief, the receptive fields of the top-layer neurons may not extend to cover the whole input domain.

The importance of the receptive field in convolution was spotted by signal processing researchers, long before the deep learning community focused on it. The most relevant work is on \textit{atrous} (dilated) convolution \cite{Holschneider_89} which used up-sampled convolution kernels by inserting zeros between the coefficients. The atrous or dilated convolution is widely used in a range of deep learning applications where multi-scale processing is crucial. The applications include image classification \cite{yu_2017}, semantic segmentation \cite{Chen_2017}, speech synthesis\&recognition \cite{Oord_16}, and image denoising \cite{tian2020}.

Dilated convolution enlarges the effective receptive fields. It has been used to provide multi-scale representation in various network configurations \cite{yu_2016, yu_2017, guo_2020, li_2019}. However, it does not solve the problem of scale completely by itself. The network architectures such as U-net aim to increase the scale tolerance of the network by creating multi-scale feature maps \cite{Ronneberger_15}, whereas others such as inception included parallel convolution paths containing different fixed-size convolutions to extract multi-scale information \cite{szegedy2016}.

The shape or orientation of the kernels was also a concern. For example, Li et al.\ \cite{li_2017} studied optimizing kernel shapes using Lasso to create arbitrary shape kernels for audio inputs, as an alternative to commonly used square kernels optimal for natural images. Weiler et al.\ \cite{Weiler_2017} employed steerable kernels trained from a harmonic functional basis to create orientation-sensitive kernels.

The receptive field of a convolution operation can be changed by varying the locations where input is sampled while maintaining the size of the kernel. The ordinary convolution uses fixed sampling locations with respect to the current position ($i$) of the kernel, (e.g., $\lbrace i-2$, $i-1$, $i$, $i+1$, $i+2 \rbrace$), whereas dilated convolution would sample sparsely (e.g., $\lbrace i-4$, $i-2$, $i$, $i+2$, $i+4\rbrace$) using a fixed sampling parameter. The active convolution model \cite{jeon_2017} attempted to learn input sampling offsets ($pn$) from training data, (e.g., $\lbrace i+p0$, $i+p
1$, $i+p2$, $i+p3$, $i+p4\rbrace$). Similarly, Dai's deformable convolution model \cite{dai_2017} dynamically computed sampling offsets per input and per location by performing additional convolutions on the input.

The adaptive model proposed here differs from these approaches in three ways: 1) it does not change the way that input is sampled, 2) it does not use secondary convolutions to compute parameters, 3) it learns the kernel size from training data. The parameters are static and not computed per input, meaning that after training the kernels are fixed.

The proposed model can be seen as an aperture-only, case-specific exemplar of a generalized form, the adaptive locally connected neuron \cite{tek_2019, tek_2018} which can learn its receptive field location and aperture using a Gaussian focus attachment.

\section{Method}

The proposed kernel model learns the receptive field size of the kernel by training a smooth envelope function that can grow or shrink in a base kernel grid. The following sections explain the role of the envelope function, provide an appropriate functional form to construct the envelope, and discuss its parameters. Before starting, note that although commonly referred to as convolution, the operation that is studied and used in neural networks is more appropriately termed cross-correlation. Therefore, for mathematical consistency, we continue with the term cross-correlation instead of convolution, although we use the terms interchangeably for the sake of consistency with the literature. 

A 2-D-matrix cross-correlation computes its output $O=[o_{i,j}]$ by calculating the weighted sum of the ($n \times n$ shaped) kernel coefficients $W=[w_{k,l}]$ times the input $X=[x_{i,j}]$ across all possible locations $i,j$. Therefore, the output matrix of valid size $(M-n)\times (N-n)$ can be expressed in the following form:
\begin{multline}\label{corr}
O = X \star W = \\ \left[ \sum_{k=-\lc n/2 \rc}^{\lc n/2 \rc} \: \sum_{l=-\lc n/2 \rc}^{\lc n/2 \rc}
x_{(i+k , j+l)} w_{(k+\lc n/2 \rc, l+\lc n/2 \rc)}\right]_{i=\lc n/2 \rc,j=\lc n/2 \rc}^{M-\lc n/2 \rc,N-\lc n/2 \rc}
\end{multline}
where, for simplicity, we can ignore the precise offsets, subscripts (e.g., $w_{k+\lc n/2 \rc, l+\lc n/2 \rc}$) and index limits to use the following form (\ref{corr2}) which is sufficient for our discussions:
\begin{equation}\label{corr2}
O = X \star W = \left[ \sum_{k}^{n} \: \sum_{l}^{n}
x_{(i+k , j+l)} \: w_{k, l}\right]_{i,j}^{M,N}
\end{equation}

\subsection{The envelope function}
\label{sec:methods:envelope}

In the adaptive model, the kernel coefficient matrix $W$ is paired with an envelope $U=[u_{k,l}]$ which controls kernel growth through an element-wise multiplication (i.e.\ the Hadamard product):
\begin{equation}
O = X \star (W \circ U) = \left[ \sum_{k}^{n} \: \sum_{l}^{n}
x_{(i+k , j+l)} \: w_{k, l} \: u_{k, l}\right]_{i,j}^{M,N}
\label{UWconv}
\end{equation} 

It may seem as if we are adding just another weight; however, the envelope coefficients are not independent of each other. Here, we define the envelope on a two-dimensional Euclidean space since it is the most common case which can be generalized to further dimensions. We omitted input channels in our notation; if the input contains channels, the weight matrix is three or more dimensional, so the envelope $U$ must be repeated on that dimension. As illustrated in Figure~\ref{fig:envelope}, the envelope resides in a base grid which is also the kernel domain. Let us assume an $n \times n$ base grid for an odd-sized square kernel; and let $U_f$ be a smooth and differentiable function defined in this domain by a parameter set $\bm{\theta} \in \mathbb{R}^p$ (\ref{eq:grid}):
\begin{equation}
\label{eq:grid}
\begin{gathered}
U_f \colon ((k,l), \bm{\theta})\mapsto u_{k,l} \in \mathbb{R} \qquad  \text{where} \\ \quad \{(k, l) \: | \: k,l \in \{1, 2, .. , n\}\} \quad \text{and} \quad
\bm{\theta} = \{\theta_1, \theta_2, .. \theta_p\}
\end{gathered}
\end{equation}

Thus, a functional form can be chosen or designed for $U_f$ to control the envelope shape represented by the coefficients $u_{k,l}$ which mask the weights. When $U_f$ is differentiable with respect to the parameters $\bm{\theta}$, the error derivatives can be calculated using the chain rule; and the updates can be performed using (\ref{eq:genericderive}):
\begin{equation}\label{eq:genericderive}
w_{k,l}^{'} \mathrel{{:}{=}} w_{k,l} - \eta \frac{\partial E}{\partial O} \frac{\partial O}{\partial w_{k,l}}\qquad
\theta_{p}^{'} \mathrel{{:}{=}} \theta_{p} - \eta \frac{\partial E}{\partial O} \frac{\partial O}{\partial \theta_{p}}\qquad
\end{equation}
where $\eta$ denotes the learning rate, $w'_{k,l}$ and $\theta_{p}^{'}$ denote the updated kernel weight coefficient and envelope parameter respectively, and $E$ denotes an error term.
Though they seem disconnected, the updates of the envelope coefficients and weights are related. We elaborate this point by inspecting the partial derivatives of $E$ with respect to $w_{k,l}$ and $\theta_p$. The expression for the derivative ${\partial E}/\partial w_{k,l}$ (\ref{partialw}) includes the focus coefficient $u_{k,l}$ as a scaler coefficient:
\begin{align}
\frac{\partial E}{\partial w_{k,l}} & = \sum_{i}^{M}
\sum_{j}^{N} \frac{\partial E}{\partial {o}_{i,j}} \:
\frac{\partial {o}_{i,j}}{\partial w_{k,l}} 
= \sum_{i}^{M} \sum_{j}^{N} \frac{\partial E}{\partial {o}_{i,j}} \: {x}_{(i+k,j+l)} \: u_{k,l}\\
& = u_{k,l} \sum_{i}^{M} \sum_{j}^{N} \frac{\partial E}{\partial {o}_{i,j}} \: {x}_{(i+k,j+l)}
\label{partialw}
\end{align}
Thus, the envelope not only controls the forward signal but also affects the weight updates. Likewise, we calculate the derivative with respect to the envelope parameter $\theta_p$:
\begin{equation}
\frac{\partial E}{\partial \theta_p}  = \sum_{i}^{M}
\sum_{j}^{N} \frac{\partial E}{\partial {o}_{i,j}} \:
\frac{\partial {o}_{i,j}}{\partial \theta_p} \end{equation} 
where
\begin{equation}
\frac{\partial {o}_{i,j}}{\partial \theta_p} = \sum_{k}^{n} \sum_{l}^{n}  {x}_{i+k,j+l} \: w_{k,l} \frac{\partial u_{k,l}}{\partial {\theta}_{p}}.
\end{equation}
Thus, we can write the following expression:
\begin{equation}
\frac{\partial E}{\partial \theta_p} = \sum_{i}^{M} \sum_{j}^{N} \frac{\partial E}{\partial {o}_{i,j}} \left( \sum_k^{n} \sum_l^n  {x}_{i+k,j+l} \: w_{k,l} \frac{\partial u_{k,l}}{\partial {\theta}_{p}}\right)
\label{partialtheta}
\end{equation}
We see that the derivative with respect to the envelope parameter is accumulated over both the input image and kernel, unlike the weight derivative (\ref{partialw}) which is only accumulated over the whole image. This is because $u_{k,l}$ values are not independent of each other.

\subsection{Choosing an Envelope Function}
\label{sec:methods:gaussian}
A Gaussian form is the primary candidate for the envelope function because it is continuous and differentiable, and it neither creates nor enhances extrema \citep{Lindeberg2011}. Its center parameter ($\bm{\mu}$) controls the position, the covariance parameter ($\Sigma$) smoothly controls the orientation and spread of the form, and $s$ performs the normalization:
\begin{equation}
\label{eq:d_theta}
U_f(\bm{g}, (\mathbf{\mu}, \mathbf{\Sigma})) = s\; e ^{ - \frac{1}{2} (\mathbf{g}-\mathbf{\mu})' \Sigma^{-1} (\mathbf{g}-\mathbf{\mu})}
\end{equation}
In two-dimensional Euclidean space $\bm{g}\in \mathbb{R}^2$, the center is two-dimensional, $\bm{\mu}=\langle\mu_x,\mu_y\rangle$ and covariance is a 2$\times$2 matrix $\Sigma =  \bigl[ \begin{smallmatrix} \sigma_{xx}^2 & \sigma_{xy}^2 \\ \sigma_{yx}^2 & \sigma_{yy}^2 \end{smallmatrix} \bigr] $. However, we exclude the rotation (and the ellipsoid kernels) from the current discussion (for an extended introduction see \cite{ilker_tez, cam_2018}). Although it is possible to train the kernel position ($\bm{\mu}$) as in \cite{jeon_2017}, we did not observe any benefit from doing so in our preliminary studies. Hence, here $\mu$ is initialized to the center of the grid and not trained. There remains only one trainable parameter $\sigma_u$ which controls the size of the circular envelope shape in the set of parameters: $\bm{\theta} = \{{\mu_x=0.5n,\;\mu_y=0.5n,\; \sigma_u}\}$.

\begin{figure}[t]
	\centering
	\subfloat[Weight, smaller envelope, effective kernel on a 9$\times$9 grid]{\label{fig_env:a}\includegraphics[width=0.95\linewidth]{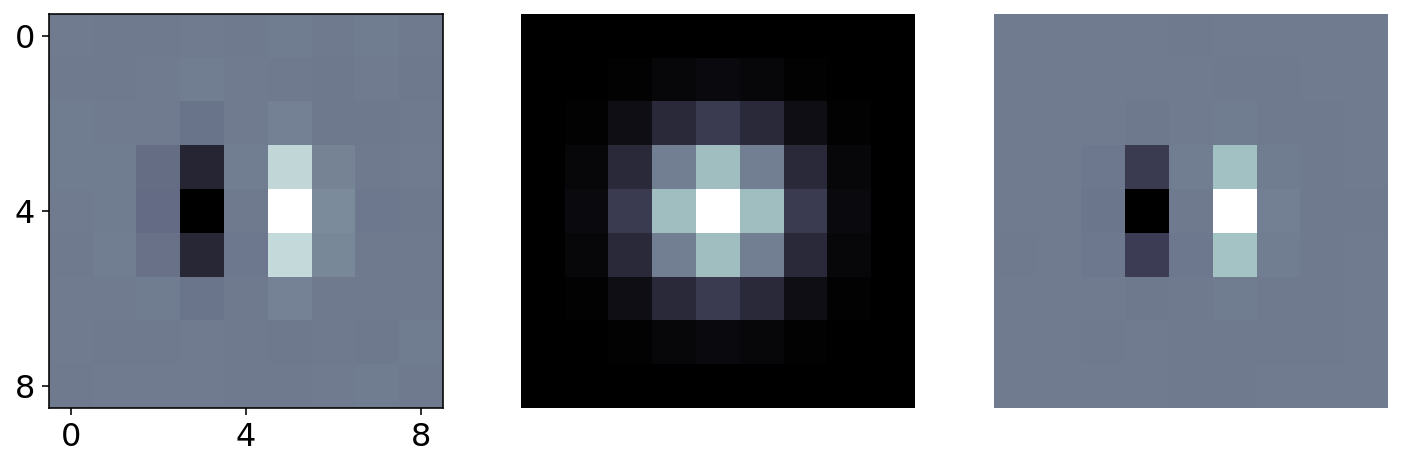}}\\
	\subfloat[Weight, larger envelope, effective kernel on a 9$\times$9 grid]{\label{fig_env:b}\includegraphics[width=0.95\linewidth]{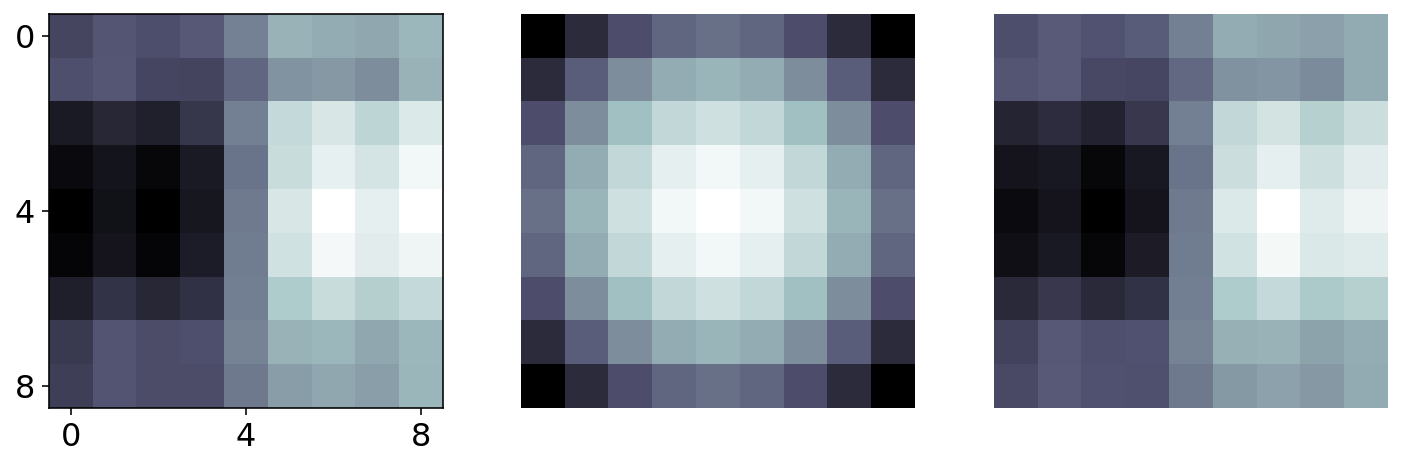}}	
	\caption{Examples of envelopes and effective (product) kernels.}
	\label{fig_env}
	
\end{figure}

During the feed-forward execution the envelope function $U_f$ is computed on the normalized grid coordinates $\bm{g}=\langle k/n,l/n\rangle$ with the current aperture $\sigma_u$ to produce envelope coefficients $u_{k,l}$ which are multiplied element-wise with the weights $w_{k,l}$ prior to the convolution. Figure \ref{fig_env} depicts the weight kernels, envelope matrices and product kernels ($W\circ U$) for two example cases with relatively smaller and larger aperture ($\sigma_u$) values.

Let us denote $l_2$-norm of a vector as $\|\bm{x}\|_2=\sqrt{x_1^2+x_2^2}$, for any given $\bm{x}=\langle x_1,x_2\rangle$. Then the partial derivative with respect to $\sigma_u$, which can replace ${\partial u_{k,l}}/{\partial {\theta}_{p}}$ in (\ref{eq:d_theta}), can be expressed as below (\ref{partial}):
\begin{equation}
\frac{\partial U_f}{\partial \sigma_u} = s \frac{ \|\bm{g}-\bm{\mu}\|_2^2}{2\sigma_u^3} e^{(-\|\bm{g}-\bm{\mu}\|_2^2/(2\sigma_u^2))}
\label{partial}
\end{equation}
\subsection{Initialization of Envelope Parameters}
Recent studies demonstrated that the initialization of weights in a neural network is crucial to improve its training and generalization capacity \citep{he2015, glorot2010}. These studies usually inspect the forward signal and backward gradient flows to suggest an optimal weight-initialization strategy. A common approach is to adjust the variance of the weights so that layer inputs and outputs have equal variance. However, in the adaptive kernel, the envelope coefficients scale the weights and change the variance of the propagated signals. Moreover, since the total fan-in of an adaptive kernel is larger than its effective fan-in it is not clear what value should be used for calculating the weight variance as recommended by common initialization schemes \citep{he2015, glorot2010}.

Nevertheless, we could derive an appropriate initialization variance for the weights in the envelope's presence. However, during the training, the updates to $\sigma_u$ would change the envelope, the product kernel, and the output variance. Therefore, we approach this problem from an alternative perspective where we normalize and scale the envelope $U$ to keep the variance of the weights unchanged by the element-wise multiplication operation $W \circ U$. Although it is not possible to keep the variance of the individual weights $w_{k,l}$ unchanged, it is possible to maintain the mean of the variances. Let us write single summation $\sum_{k,l}^{nxn}$ instead of $\sum_{k}^{n}\sum_{l}^{n}$ to simplify the notation and define the mean of variances along an $n\times n$ matrix $A=[a_{k,l}]_{n\times n}$ by:
\begin{equation}
\MVar [A]=\frac{1}{n^2} \sum\limits_{k,l}^{\nn}\Var(a_{k,l})
\end{equation}

Then, Theorem \ref{the1} states that the mean of the variances of the weights will be unchanged by the Hadamard multiplication of the envelope matrix if the mean of the expected value of the squared envelope coefficients is 1.
\begin{theorem}
	\begin{flushleft}
		$ \displaystyle\MVar[W\circ U]=\MVar[W]=\sigma_{w}^2\quad$ when
	\end{flushleft}
	$\displaystyle	 \frac{1}{n^2} \sum_{k,l}^{\nn}\mathbb{E}(u_{k,l}^2)=1$.
	\label{the1}
\end{theorem}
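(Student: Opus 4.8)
The plan is to reduce the matrix-level identity to an entrywise variance computation and then invoke the standard variance-of-a-product formula. First I would make the modeling assumptions explicit: the weights $w_{k,l}$ are drawn i.i.d.\ with zero mean and common variance $\sigma_w^2$ (as in the He/Glorot schemes the paper cites), and the envelope coefficients $u_{k,l}$ are independent of the weights. Under these assumptions the right-hand equality is immediate, since $\MVar[W]=\frac{1}{n^2}\sum_{k,l}^{\nn}\Var(w_{k,l})=\frac{1}{n^2}\sum_{k,l}^{\nn}\sigma_w^2=\sigma_w^2$.

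The substance is the left-hand equality. By the definition of $\MVar$, it suffices to evaluate $\Var(w_{k,l}u_{k,l})$ for each grid position. I would use the identity for the variance of a product of two independent random variables $X,Y$, namely $\Var(XY)=\mathbb{E}[X^2]\,\mathbb{E}[Y^2]-(\mathbb{E}[X])^2(\mathbb{E}[Y])^2$. Applying this with $X=w_{k,l}$ and $Y=u_{k,l}$ and using $\mathbb{E}[w_{k,l}]=0$ kills the second term and leaves $\Var(w_{k,l}u_{k,l})=\mathbb{E}[w_{k,l}^2]\,\mathbb{E}[u_{k,l}^2]=\sigma_w^2\,\mathbb{E}[u_{k,l}^2]$, where the last step again uses the zero-mean property so that $\mathbb{E}[w_{k,l}^2]=\Var(w_{k,l})=\sigma_w^2$.

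Substituting this back, I would factor the constant $\sigma_w^2$ out of the averaging sum: $\MVar[W\circ U]=\frac{1}{n^2}\sum_{k,l}^{\nn}\sigma_w^2\,\mathbb{E}[u_{k,l}^2]=\sigma_w^2\Big(\frac{1}{n^2}\sum_{k,l}^{\nn}\mathbb{E}[u_{k,l}^2]\Big)$. The hypothesis that the mean of the expected squared envelope coefficients equals one then makes the bracketed factor collapse to $1$, yielding $\MVar[W\circ U]=\sigma_w^2=\MVar[W]$, which is the claim.

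The step I expect to carry the real weight is the product-variance identity together with the zero-mean assumption on the weights: it is precisely because the weights are centered that the cross term vanishes and the per-entry variance factorizes cleanly into $\sigma_w^2\,\mathbb{E}[u_{k,l}^2]$. If one instead treats the envelope as deterministic (a fixed function of $\sigma_u$ and the grid), the same conclusion follows even more directly, since $\Var(c\,w_{k,l})=c^2\sigma_w^2$ for a constant $c=u_{k,l}$, and $\mathbb{E}[u_{k,l}^2]$ reduces to $u_{k,l}^2$; I would note this as a remark so that the result covers both the randomly-initialized and fixed-envelope readings.
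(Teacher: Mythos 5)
Your proposal is correct and follows essentially the same route as the paper's own proof: independence of $u_{k,l}$ and $w_{k,l}$ plus the zero-mean assumption on the weights gives $\Var(u_{k,l}w_{k,l})=\mathbb{E}(u_{k,l}^2)\,\sigma_w^2$ via the product-variance identity, and averaging over the grid then invokes the normalization hypothesis. Your closing remark on the deterministic-envelope reading is a harmless addition but does not change the argument.
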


\begin{proof}
	Since $u_{k,l}$ and $w_{k,l}$ are independent for any $k,l$ and weights are i.i.d with zero mean (i.e.\ $\mathbb{E}(w_{k,l}=0$)), we have
	\begin{align}
	\Var(u_{k,l}  w_{k,l}) &= 
	\mathbb{E}(u_{k,l}^2 w_{k,l}^2)-\left[\mathbb{E}(u_{k,l})\mathbb{E}(w_{k,l})\right]^2 \\
	&=\mathbb{E}(u_{k,l}^2)\mathbb{E}(w_{k,l}^2)=\mathbb{E}(u_{k,l}^2)\sigma_{w}^2
	\end{align}
	Next, we consider the mean of variances.
	\begin{align}
	\MVar(U \circ W) &=
	\frac{1}{n^2}\sum_{k,l}^{\nn} \Var(u_{k,l} w_{k,l}) \\ &=\sigma_{w}^2\frac{1}{n^2}\sum_{k,l}^{\nn} \mathbb{E}(u_{k,l}^2 )\\
	&=\MVar(W)\frac{1}{n^2}\sum_{k,l}^{\nn} \mathbb{E}(u_{k,l}^2).
	\end{align}
	Hence the result follows.
\end{proof}

Likewise, we may consider the backward propagation of the error variance. Using (\ref{partialw}) and the assumptions $\mathbb{E}({x}_{(i+k,j+l)})=0$ and $\mathbb{E}(w_{k_l})=0$, we formulate the mean of the gradient variances of the weights as follows (\ref{the2})(see the appendix for the derivation):
\begin{align}
\MVar \bigg(\frac{\partial E}{\partial W} \bigg) & =  \frac{1}{n^2} \sum_{k,l}^{\nn}\Var \bigg({\frac{\partial E}{\partial w_{k_l}}}\bigg)\\ &=\sigma_{x}^2 \displaystyle  \bigg[ \frac{1}{n^2} \sum_{k,l}^{\nn}\mathbb{E}(u_{k,l}^2) \bigg] \sum_{i,j}^{M,N} \mathbb{E} \bigg[ \big(\frac{\partial E}{\partial {o}_{i,j}} \big)^2 \bigg]
\label{the2}
\end{align}

Thus, Eq.(\ref{the2}) states that the envelope coefficients affect the mean of the gradient variances of the weights minimally if the mean expected value of the squared envelope coefficients is 1.0. However, the envelope function is a deterministic function of the random parameter $\sigma_u$ which has an unknown probability density, because it will be learned by the network. However, we see that the function $U_f$ can be scaled so that $u_{k,l}$ sum to a constant value irrespective of the $\sigma_u$ value. Therefore, satisfying the condition on Theorem \ref{the1} translates to a condition on the norm of the $\|U\|_2^2= \sum_{k,l}^{n\times n} u_{k,l}^2=n^2$. In practice, a convolution layer would have more than one kernel to calculate multiple outputs. Hence, in the forward run, each kernel $q$ calculates its envelope using its own $\sigma^q_u$, then normalizes itself using $s_u^q$:
\begin{equation}
s_u^q= \frac{n}{\sqrt{\sum\limits_{\mathbf{g} \in A \times A} \left(e ^{ - (\frac{\mathbf{g}-\mathbf{\mu})^T(\mathbf{g}-\mathbf{\mu}))}{ 2(\sigma^{q}_u)^2}}\right)^2}}
\label{norm}
\end{equation}

To test this proposition empirically, we set up a simple experiment. In a loop of increasing aperture ($\sigma_u$) values, we calculated the corresponding envelope matrix and also randomly sampled weight matrices of size $n \times n\times channels \times filters$ from a uniform distribution (normal distribution was also tested). The weight sample variance was calculated along the channel and filter dimensions. Figure \ref{fig:wuvar} shows the mean of the variances of the weight matrices (MVar[W]) against the mean of the variances of the product kernels (MVar[$W\circ U$]) for increasing envelope (U) aperture $\sigma_u$ value. It can be seen that when we normalized $U_f$ using (\ref{norm}), negligibly small deviations occurred in the product kernel variance at very low aperture values, whereas the larger aperture envelopes maintained the mean of the weight variances perfectly. 

\begin{figure}
	\centering
	\includegraphics[width=0.9\linewidth]{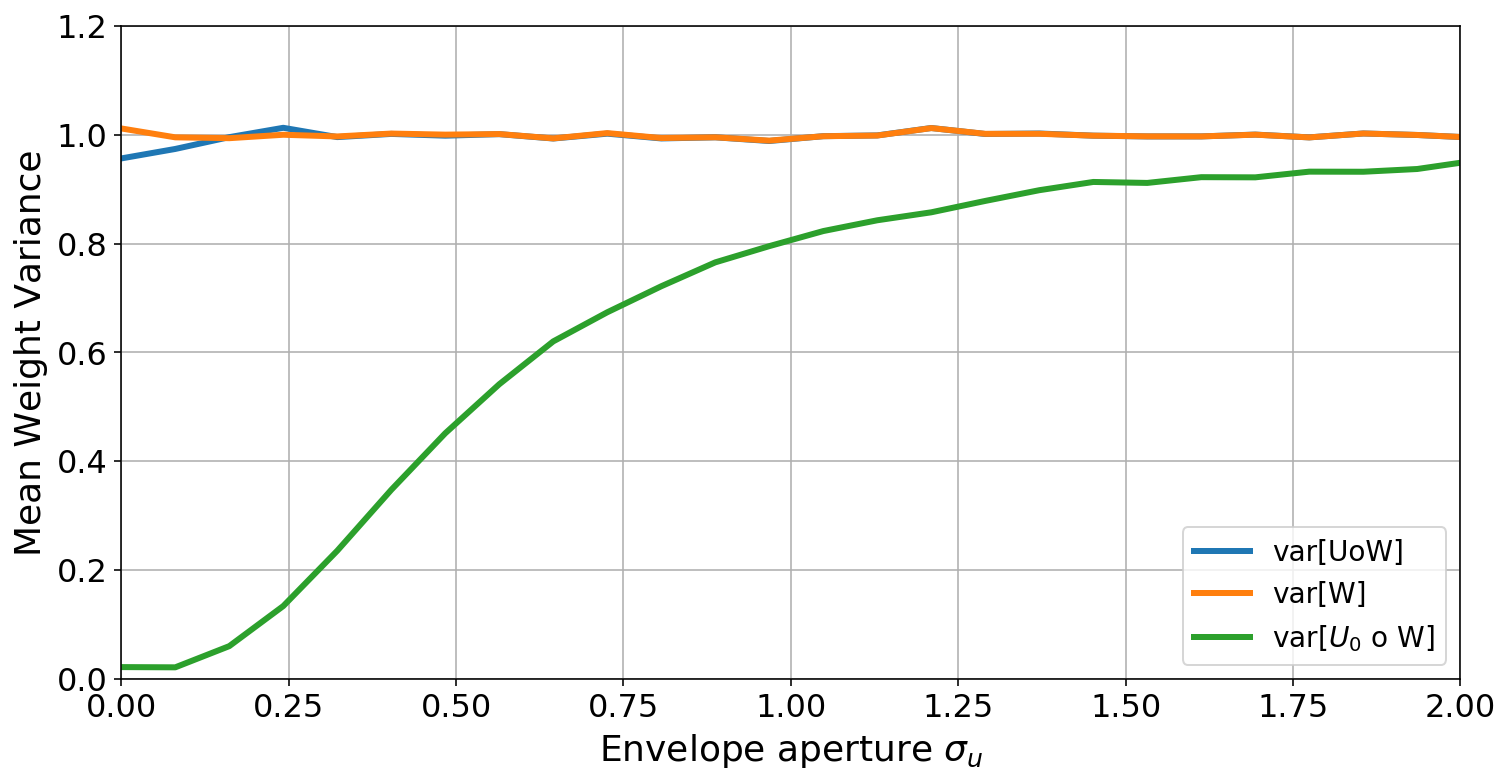}
	\caption{Mean of the kernel variances against increasing aperture $\sigma_u$. $U$: normalized, $U_0$: unnormalized envelope.}
	\label{fig:wuvar}
\end{figure}

On the other hand, we do not have formal guidance on the initialization of $\sigma_u$, except that it must be positive and non-zero. In practice, we have noticed that initializing $\sigma_u$ in the range $[1/\text{n}, \text{n}]$ works well ($n$=kernel size). However, during training the $\sigma_u$ value must be monitored and clipped to stay above a value (e.g., $1/\text{n}$) to prevent over-shrinking by high gradient values and fluctuations that may occur.

\section{Experiments}\label{sec:exp}

We divided the experiments into four sections. First, we investigated whether the proposed adaptive kernel can learn common image processing filters. Second, we set up a simple convolutional network and compared the adaptive kernels against the ordinary convolutional kernels in their learning and generalization performance. Next, we repeated the same comparison in a popular deep architecture, ResNet \cite{resnet}. Finally, we tested the adaptive kernels in a U-net \cite{Ronneberger_15} architecture for segmentation. We implemented the proposed model in Python 3 using Keras \& Tensorflow \cite{keras}. All code and a demo are available in \cite{gitcodeacnn}.

\subsection{Learning Basic Image Processing Kernels}\label{sec:exp_filt}
We set up a simple auto-encoder network to test whether the new adaptive kernels are able to synthesize some basic image processing kernels. The network configuration can be found in the supplementary materials. The network took a single image (e.g., Figure \ref{fig:auto_enc:a}) as input to learn the outputs of nine different image processing kernels of size 9$\times$9 pixels. The targets included the output of 3$\times$3 Laplace, horizontal and vertical Sobel kernels, Gauss smoothing kernels of different variance, and applications of Laplace and Sobel to the Gauss-smoothed outputs, as shown in Figure \ref{fig:auto_enc:e}. Figure \ref{fig:auto_enc:f} shows the outputs of the network after 500 training iterations using stochastic gradient descent optimizer with a learning rate and momentum of 0.1 and 0.9, respectively. The training converged after 150-200 iterations, as seen in Figure \ref{fig:auto_enc:c}. In addition, we observed that the minimum square error of the adaptive convolution network was slightly lower than an ordinary convolution network which contain kernels of equal size (9$\times$9). However, both networks were able to learn the kernels. Figure \ref{fig:auto_enc:d} shows that most of the aperture parameters ($\sigma_u$) converged at around 150-200 epochs in the adaptive network. The initial weight ($W$) and envelope ($U$) kernels are shown in Figures \ref{fig:auto_enc:g} and \ref{fig:auto_enc:h} with the learned envelope and final product kernels ($U\circ W$) are given in Figures \ref{fig:auto_enc:i} and \ref{fig:auto_enc:j}, respectively. It can be seen that the envelopes were successfully learned in the presence of the weights and vice versa. In conclusion, the adaptive kernels were able to learn basic image processing kernels of different size and character.

\begin{figure*}
	\centering     
	\subfloat[Input]{\label{fig:auto_enc:a}\includegraphics[width=0.2\linewidth]{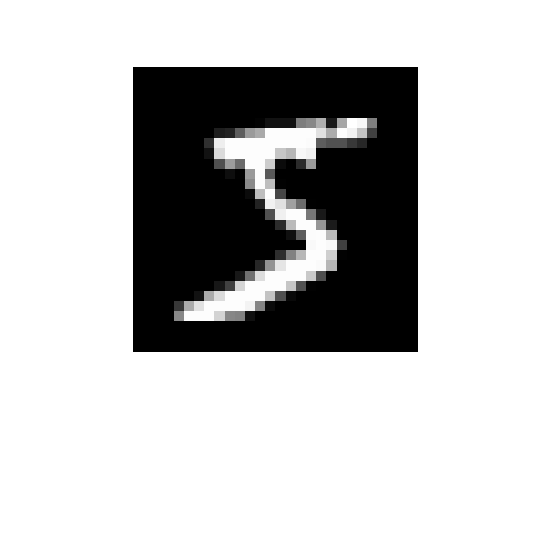}}
	\subfloat[Train Loss]{\label{fig:auto_enc:c}\includegraphics[width=0.35\linewidth]{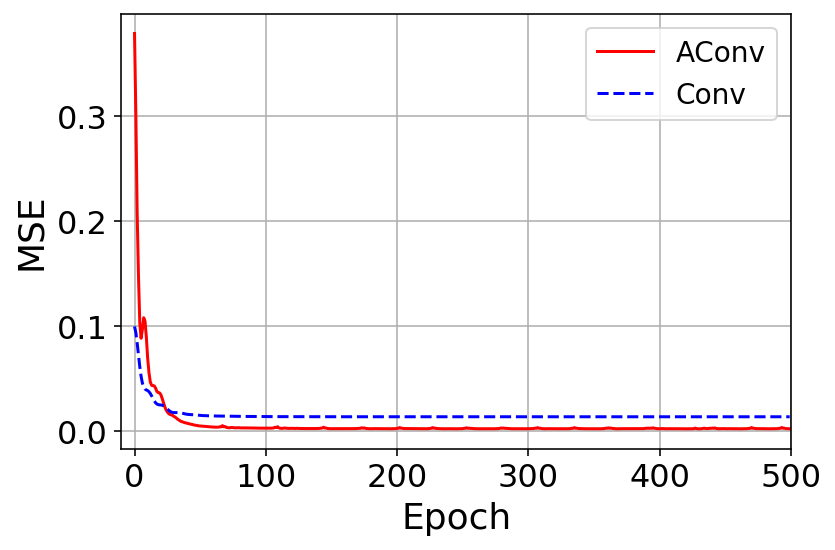}}
	\subfloat[Apertures]{\label{fig:auto_enc:d}\includegraphics[width=0.35\linewidth]{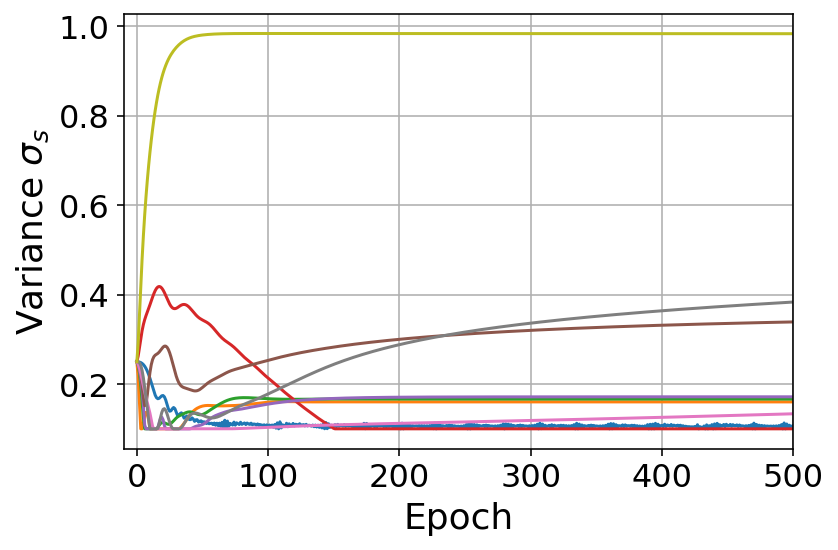}}\\\vspace{-0.0cm}
	\subfloat[Targets]{\label{fig:auto_enc:e}\includegraphics[width=0.98\linewidth]{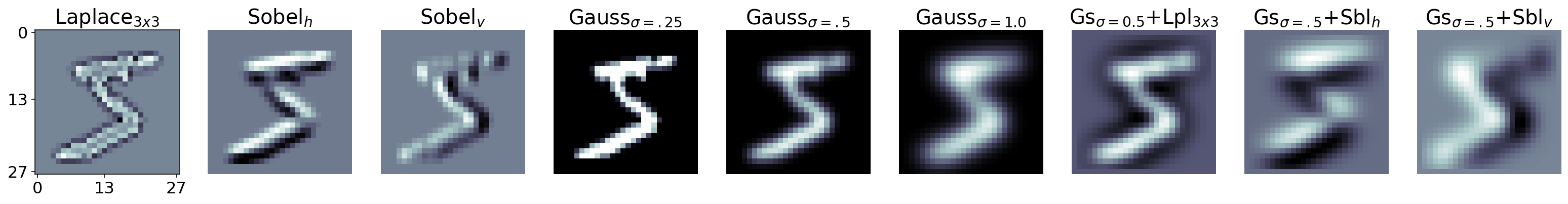}}\\ \vspace{-0.0cm}
	\subfloat[Predictions]{\label{fig:auto_enc:f}\includegraphics[width=0.98\linewidth]{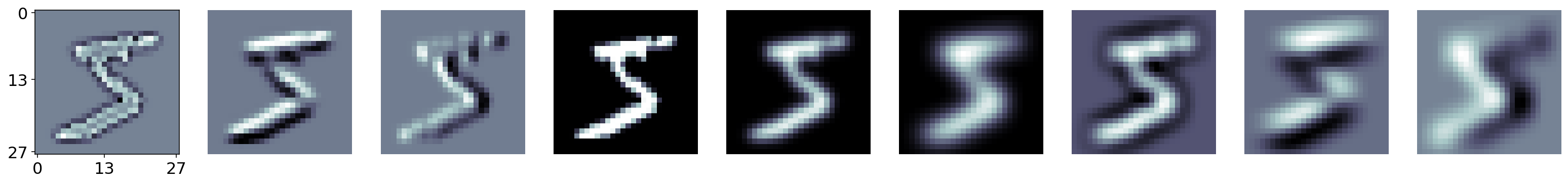}}\\ \vspace{-0.0cm}
	\subfloat[Weights before training]{\label{fig:auto_enc:g}\includegraphics[width=0.98\linewidth]{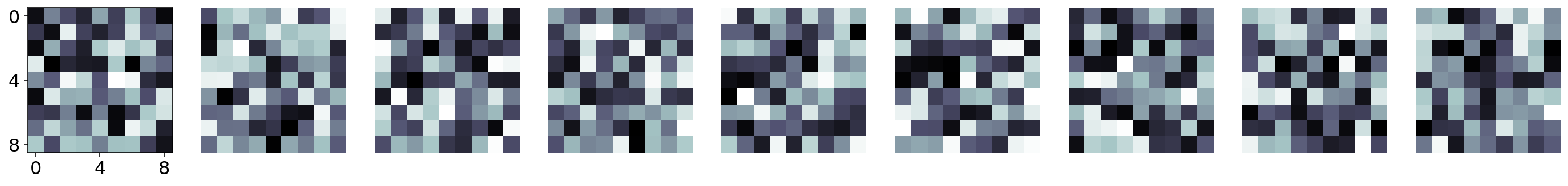}}\\ \vspace{-0.0cm}
	\subfloat[Envelopes before training]{\label{fig:auto_enc:h}\includegraphics[width=0.98\linewidth]{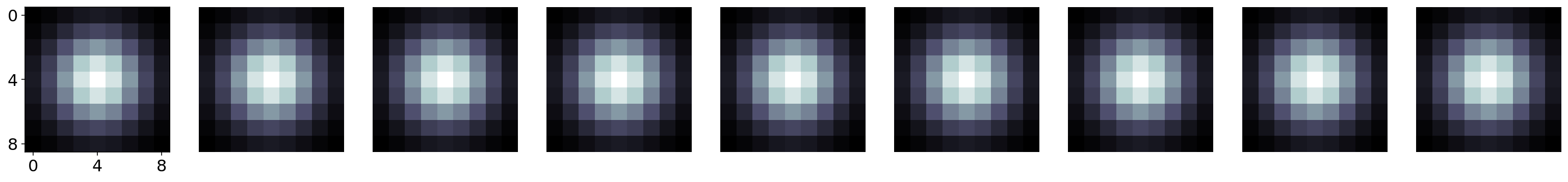}}\\ \vspace{-0.0cm}
	\subfloat[Envelopes after training]{\label{fig:auto_enc:i}\includegraphics[width=0.98\linewidth]{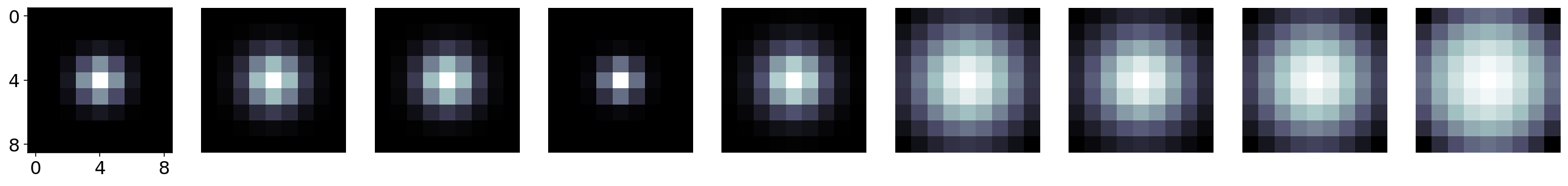}}\\ \vspace{-0.0cm}
	\subfloat[Product kernels after training]{\label{fig:auto_enc:j}\includegraphics[width=0.98\linewidth]{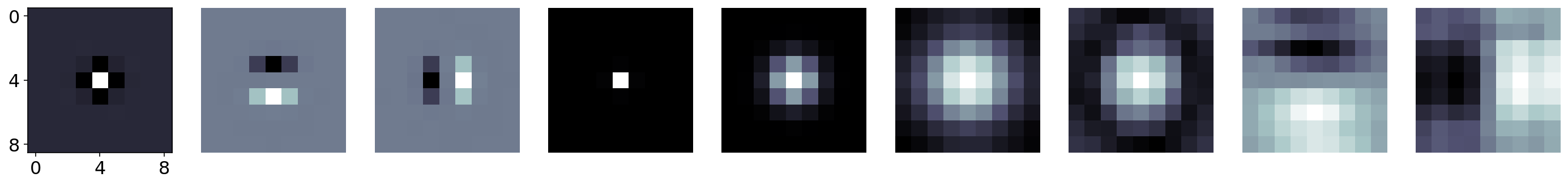}}
	\caption{The auto-encoder with the adaptive convolution kernels learning basic image kernels. a) Input image. b) Mean Squared Loss. c) Change of $\sigma_u$ during training. d) Output images (targets) created by image processing kernels (Laplace, Sobel, Gauss and combinations). e) Network predicted outputs after 2500 training updates. f) Initial weights. g) Initial envelopes. h) Learned envelopes. i) Effective (product) kernels).}
	\label{fig:auto_enc}
\end{figure*}

\subsection{Comparisons in Simple Convolutional Network}\label{sec:exp_simp}
Next, we compared the proposed adaptive kernels (ACONV) to the ordinary convolution (CONV) kernels in a simple convolutional classification network. The basic network configuration is summarized in Table \ref{simp:net} (see the supplementary figures for a plot of the network graph). The network was built using two consecutive convolutional layers (CONV or ACONV) of 32 kernels followed by a single max-pool layer of size 2$\times$2 and a dense classification layer of 256 units surrounded by two drop-out layers. All neuronal layers were followed by batch normalization (BN) and rectified linear activation units (RELU). The network output was formed of $c$ Softmax units, where $c$ was equal to the number of dataset categories.

The popular gray-scale MNIST character recognition dataset \citep{lecun1998a} (MNIST) was the first place to start. More challenging datasets were also used: a cluttered version of MNIST data (CLT), comprised of randomly transformed MNIST samples superimposed on cluttered 60$\times$60 backgrounds \citep{jaderberg2015}; the CIFAR-10 general object classification dataset which is composed of 32$\times$32$\times$3 RGB images of ten concrete categories such as car, plane, bird, horse \citep{cifar10}; and the FASHION (clothes) dataset which is arranged similarly to MNIST \citep{xiao2017} to include 10 categories such as t-shirt, pullover, and coat. These almost-standard datasets had already been separated into training (60000) and test (validation) instances (10000). The tests also included the ``Faces in the Wild'' dataset (LFW-Faces) \citep{lfw} as a benchmark for face verification. The LFW-Faces set contains 13233 images of 5749 people; in order to reduce the number of output classes, individuals with less than 20 images were excluded from the experiments, resulting in a dataset of 3023 (2267 training, 756 validation) images of 62 people.

All comparisons were repeated with different kernel sizes $n$ (3$\times$3, 5$\times$5, 7$\times$7, 9$\times$9) and with 5 different random initializations. To ensure a fair comparison, we fine-tuned the ordinary kernel network to get the best validation accuracy, then replaced the ordinary kernel with the adaptive kernel. We then tuned the model-specific parameters such as the initial $\sigma_u$'s before comparing the two cases. In other words, for the given configuration, we compared the maximum performance we could achieve using the ordinary kernel with the adaptive kernel. We used stochastic gradient descent optimizer with an initial learning rate of $0.1\times\eta_{dset}$ (dataset specific multiplier) and a momentum of 0.9 and gradient clip value of 1.0. We further employed an adaptive learning rate schedule which monitors the validation loss and drops the learning rate by a factor of 0.9 when no improvement is seen in the past 10 epochs. Table \ref{simp:net} lists other important parameters used in training. We initialized the aperture values $\sigma_u^q$ with linearly spaced values in the range [0.1,0.5]. In addition, we attached a clip function to the optimizer to clip the $\sigma_u$ values within range $[1/n, n]$ after each update.

\begin{algorithm}
	\DontPrintSemicolon
	\KwIn{network, dataset, $N_{repeats}$, $N_{epochs}$, $n$: kernel size, $\eta_{dset}$: learning rate multiplier, $m=0.9$: momentum rate. \emph{OPT: SGD (stochastic gradient descent with momentum) or Adam or SGD with Cyclic Schedule}} 
	\KwOut{BestTestResults: list of best test accuracies.}
	\Begin{
		BestTestResults = []\;
		\For{$r \leftarrow 0$ \KwTo $N_{repeats}-1$}{
			EpochAccuracyList = []\;
			trainX,trainy,testX,testy = split(dataset)\;
			\For{$e\leftarrow 0$ \KwTo $N_{epochs}-1$}{
				
				\For{each batch (Xinputs, targets) in (trainX, trainy)}{
					params $\leftarrow $ network.trainableparams\;
					pred $\leftarrow $ network.output(Xinputs)\;
					loss $\leftarrow $ categoricalcrossentropy(targets, pred)\;
					
					updates $\leftarrow $ OPT(loss, params, 0.1* $\eta_{dset}$, $m$, clipvalue=1.0)\;
					
					\If{$type$(network) == focused} {
						updates.append(clip(params.sigma, 1/n, n))\;					
					}			
					network.update(updates)				
				}		
				score $\leftarrow $ accuracyscore(network, testX,testy)\;
				EpochAccuracyList.append(score)
			}
			maxscore $\leftarrow $ max(EpochAccuracyList)\;
			BestTestResults.append(maxscore)\;
			
		}
	}
	\caption{Training, validation, and optimization procedure}
	\label{algo:3}
\end{algorithm} 

Figure \ref{fig:simp:comp} shows a comparison of the mean validation accuracies with respect to training epochs. In all five datasets, the adaptive layers performed better than their fixed-size counterparts. As anticipated, the 3$\times$3 ACONV kernels performed the least effectively, since there is limited room to operate the adaptive aperture. In contrast, the 7$\times$7 and 9$\times$9 kernels often performed the best.

\begin{figure*}
	\centering     
	\subfloat[MNIST]{\label{fig:simp:comp:a}\includegraphics[width=0.5\linewidth]{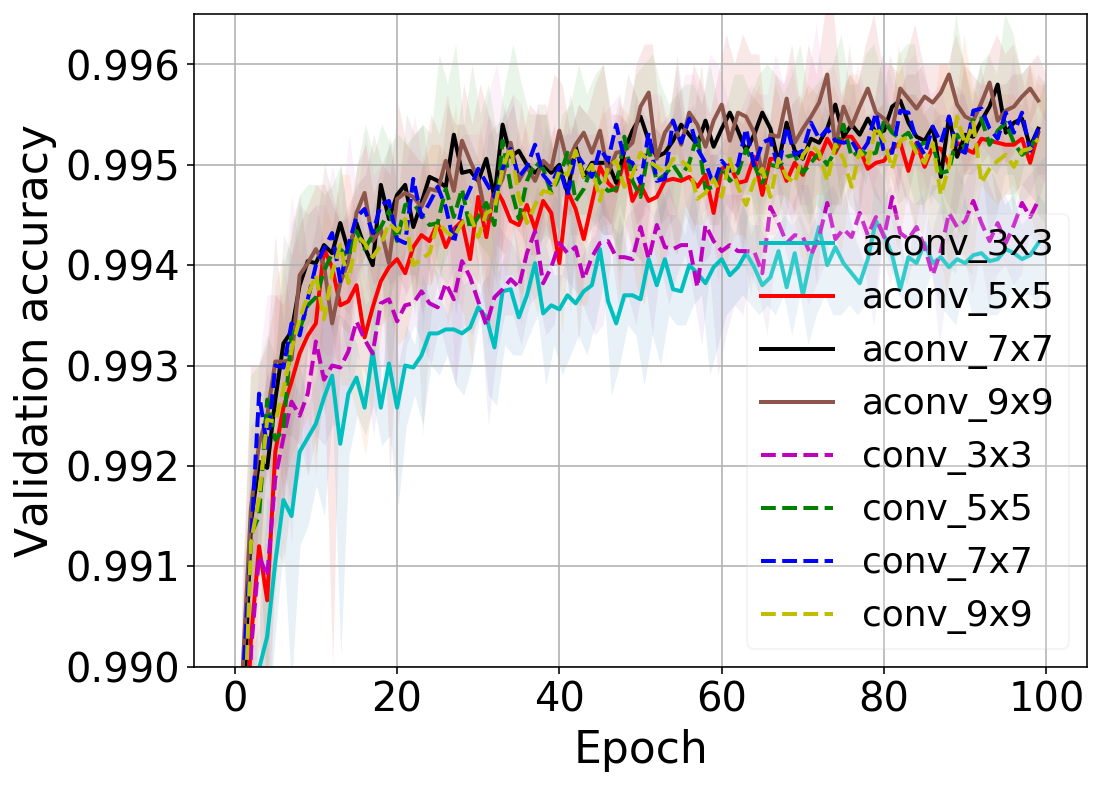}}
	\subfloat[MNIST-CLUT]{\label{fig:simp:comp:b}\includegraphics[width=0.5\linewidth]{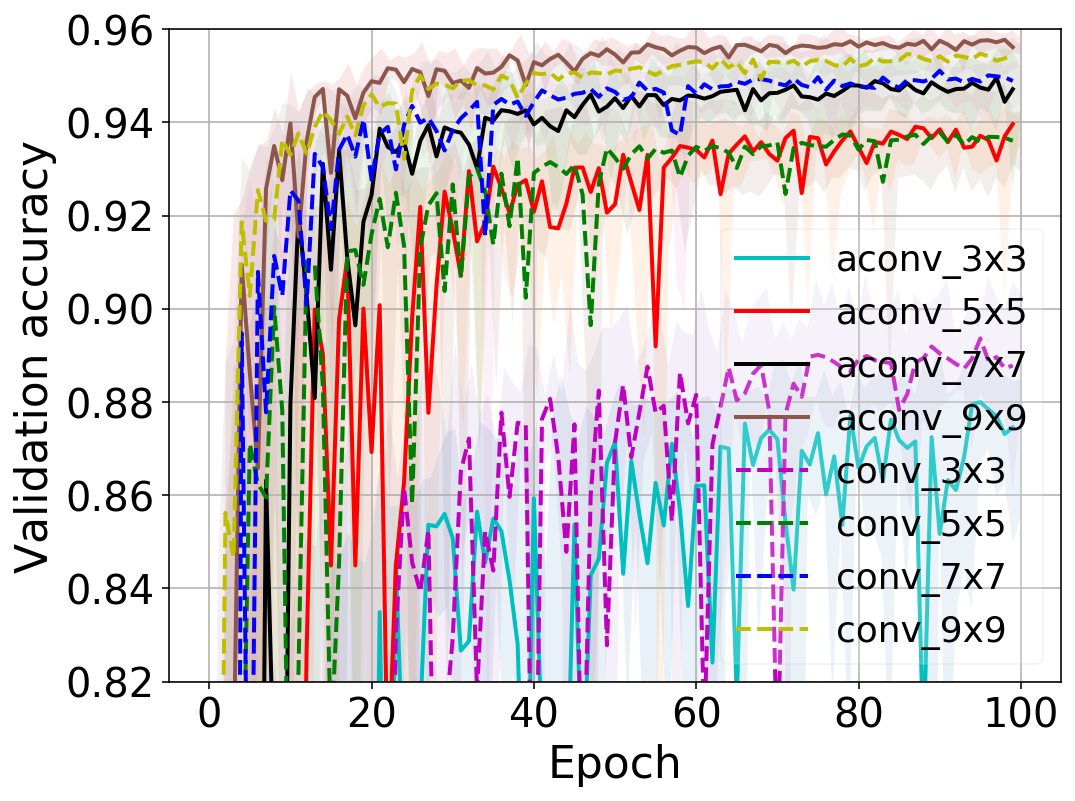}}\\
	\subfloat[CIFAR10]{\label{fig:simp:comp:c}\includegraphics[width=0.5\linewidth]{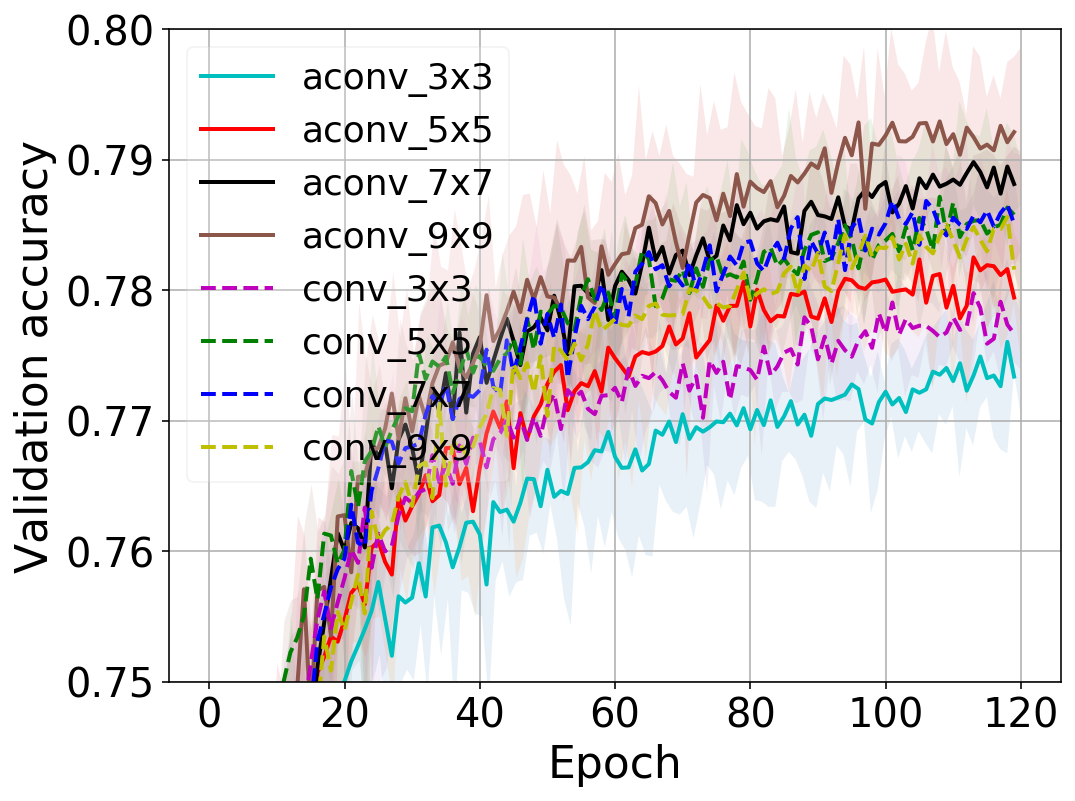}}
	\subfloat[FASHION]{\label{fig:simp:comp:d}\includegraphics[width=0.5\linewidth]{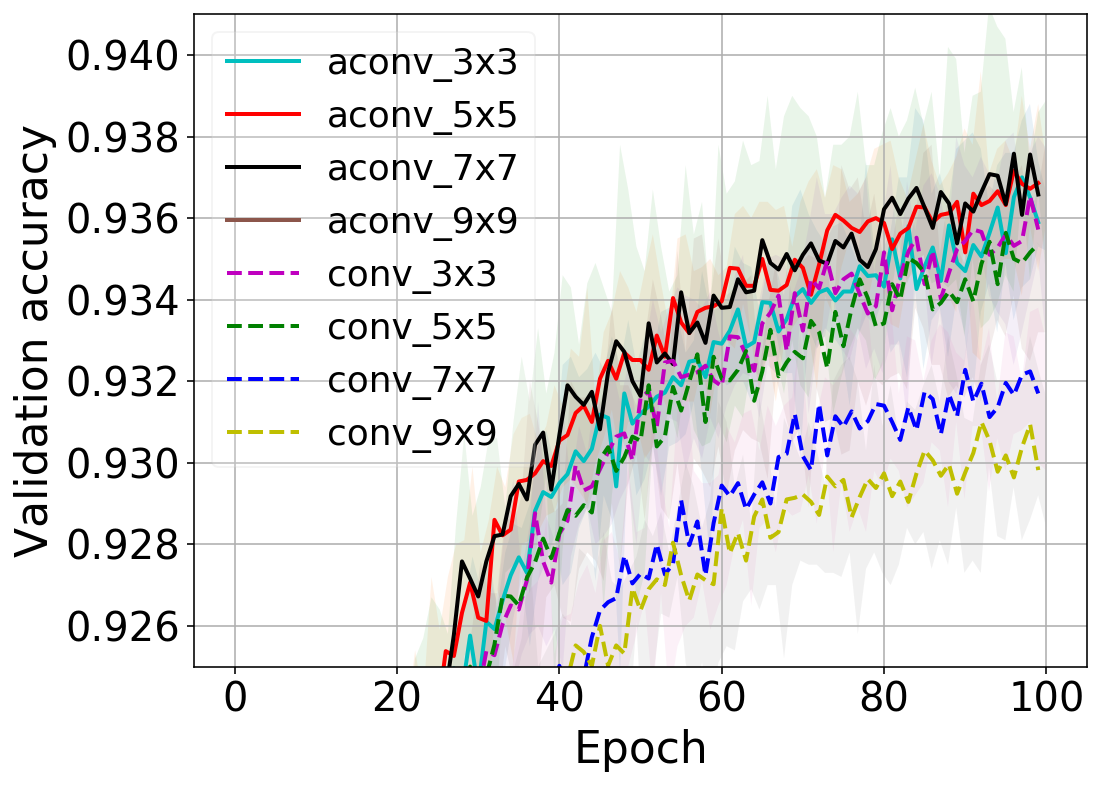}}\\
	\subfloat[FACES]{\label{fig:simp:comp:e}\includegraphics[width=0.5\linewidth]{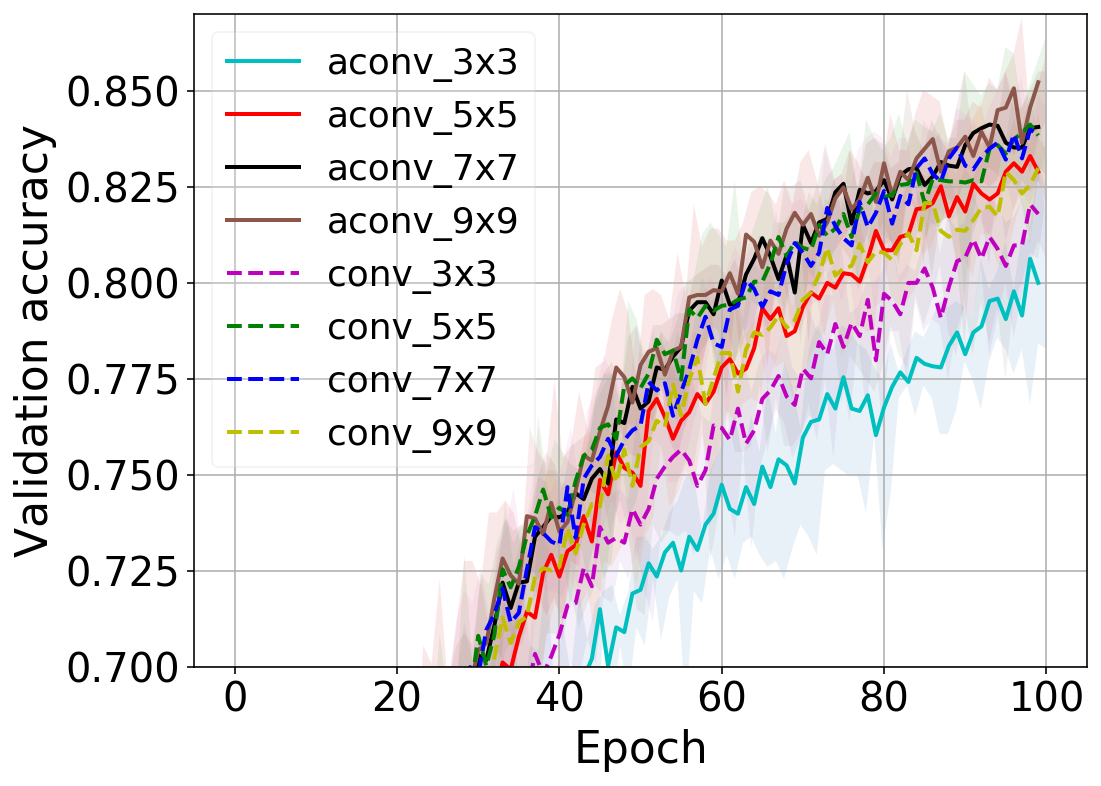}}
	\hfil
	\subfloat[Fashion dataset network (product) weights]{\label{fig:simp:comp:f}\includegraphics[width=0.4\linewidth]{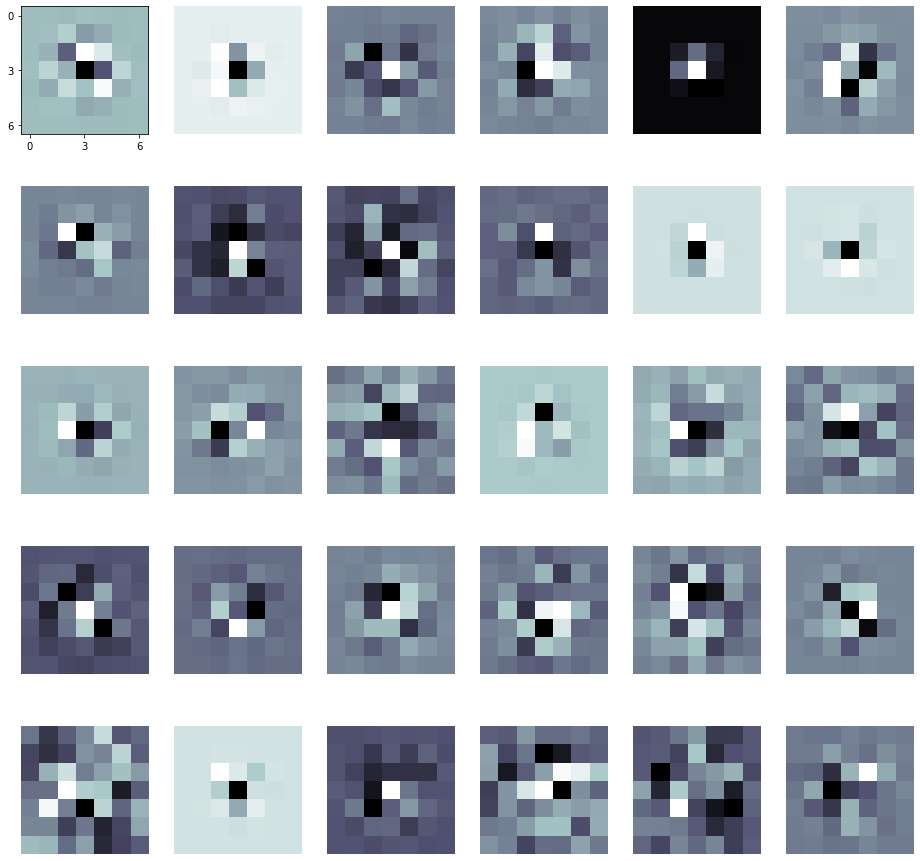}}
	
	\caption{Simple Convolutional Network Comparisons: a-e) Validation Accuracy Plots. f) Learned effective (product) kernels (ACONV-1) on FASHION dataset.)}
	\label{fig:simp:comp}
\end{figure*}

Table \ref{tab:comp:simp} summarizes the results that were calculated using Algorithm \ref{algo:3} across five repeats. Comparing the mean peak validation accuracies, the adaptive filter reached higher validation accuracies in all five datasets. The t-tests compared the means of peak validation accuracies when using the kernel size of the maximum peak performance and results confirmed that the accuracy improvements were all statistically significant. 

Another observation was about the performance of ordinary kernels in different sizes. Although 3$\times$3 sized kernels are preferred in most applications, we observed that the larger kernels produced significantly better results in our setting. Finally, Figure \ref{fig:simp:comp:e} depicts the learned kernels during the Fashion dataset training which demonstrates the varying kernel sizes.

\begin{table*}
	\caption{Left: simple convolutional network. Right: ResNet network and training parameters.}
	\resizebox{\textwidth}{!}{
		\begin{tabular}{  l  r  }
			
			\begin{tabular}[t]{  l | c | c | c | c | c  }
				\toprule
				Simple Network & \small{MNIST} & \small{CLUT} & \small{CFR10} & \small{Fashion} & \small{Faces} \\ \hline
				Num Params & 1.6M & 75K& 1.4M & 0.9M & 0.9M\\ \hline
				Batch       & 128 & 128 & 64 & 256 & 8 \\ \hline
				Augment     & False & False & False& False& True  \\ \hline
				Epoch       & 100 & 100& 120& 100 & 100 \\ \hline
				Dropouts     & 0.5 & 0.5& 0.5& 0.5 & 0.5 \\ \hline
				$\eta_{dset}$ (x 0.1) & 0.1 & 1.0 & 0.1 & 0.1 & 0.01
				\\ \hline
			\end{tabular}
			\hspace{0.5cm}
			\begin{tabular}[t]{  l | c | c | c | c | c  }
				\toprule
				ResNet Params & \small{MNIST} & \small{CLUT} & \small{CFR10} & \small{Fashion} & \small{Faces} \\ \hline
				Filters & 16 & 16 & 16 & 16 & 16  \\ \hline	
				Num Blocks & 1 & 1 & 3 & 2 & 2 \\ \hline
				Num Layers  & 11 & 11 & 20& 14 & 14  \\ \hline
				Num Params & 75K& 75K& 1.4M & 0.9M & 0.9M\\ \hline
				Batch       & 128 & 128 & 128 & 128 & 8 \\ \hline
				Augment     & False & False & True& True& True  \\ \hline
				Epoch       & 50 & 50& 200& 100 & 150 \\ \hline
				Dropout     & 0.5 & -& -& - & - \\ \hline
				$\eta_{dset}$ ($\times$ [1e-3$\rightarrow$0.5]) & 1.0 & 1.0 & 0.1 & 0.1 & 0.1 \\ \hline
				
			\end{tabular}
		\end{tabular}
	}
	\label{simp:net}
\end{table*}

\begin{table*}[t]
	\caption{Validation performances of simple ordinary (CONV) and adaptive (ACONV) convolution networks on popular image classification sets. The two-tailed t-tests are included for each case. N (repeats)=5, p: p-value, \hl{highlights} indicate \hl{*: p-value$<$0.05}. Best size indicates the best performing (and t-test comparison) kernel size.}{
		\resizebox{\textwidth}{!}{%
			\begin{tabular}{@{}lcccccccccc@{}}
				\toprule
				& \multicolumn{2}{c}{MNIST} & \multicolumn{2}{c}{CLT} & \multicolumn{2}{c}{CIFAR-10} & \multicolumn{2}{c}{Fashion} & \multicolumn{2}{c}{LFW-Faces}  \\ \midrule

				& Mn$\pm$std & \multicolumn{1}{c|}{Max} & Mn$\pm$std & \multicolumn{1}{c|}{Max} & Mn$\pm$std & \multicolumn{1}{c|}{Max} & Mn$\pm$std & \multicolumn{1}{c|}{Max} & Mn$\pm$std & \multicolumn{1}{c|}{Max}  \\ \midrule

				CONV & 99.58$\pm$2e-4 & \multicolumn{1}{c|}{99.61} & 95.59$\pm$2e-3 & \multicolumn{1}{c|}{95.9} & 78.74$\pm$2e-3 & \multicolumn{1}{c|}{78.95} & 93.31$\pm$6e-4 & \multicolumn{1}{c|}{93.38} & 83.87$\pm$7e-3 & \multicolumn{1}{c|}{85.66} \\ \midrule
				
				ACONV & \textbf{99.63$\pm$2e-4} & \multicolumn{1}{c|}{{\textbf{99.66}}} & \textbf{95.9$\pm$9e-4} & \multicolumn{1}{c|}{\textbf{96.06}} & \textbf{\textbf{79.63$\pm$5e-3} }& \multicolumn{1}{c|}{\textbf{80.1}} & \textbf{93.84$\pm$2e-3} & \multicolumn{1}{c|}{\textbf{94.12}}& \textbf{85.79$\pm$6e-3} & \multicolumn{1}{c|}{\textbf{86.93}} \\ \midrule

				\small{T-Test (t,p)} & 2.88 & \multicolumn{1}{c|}{\hl{0.02*}} & 2.46 & \multicolumn{1}{c|}{\hl{0.039*}} & 3.28 & \multicolumn{1}{c|}{\hl{0.011*}} & 6.16 & \multicolumn{1}{c|}{\hl{2.7e-4*}} & 3.89 & \multicolumn{1}{c|}{\hl{4.5e-3*}} \\ \midrule
				
				Best size & \multicolumn{2}{c|}{9$\times$9} &  \multicolumn{2}{c|}{9$\times$9} & \multicolumn{2}{c|}{9$\times$9} & \multicolumn{2}{c|}{7$\times$7} & \multicolumn{2}{c|}{9$\times$9}  \\ 
				
				\bottomrule
		\end{tabular}}
	}\label{tab:comp:simp}
\end{table*}

\begin{table*}
	\caption{Validation performances of Residual ordinary (CONV) and adaptive (ACONV) convolution networks (ResNet) in popular image classification sets. The two-tailed t-tests are included for each case. N (repeats)=5, p: p-value, \hl{highlights} indicate \hl{*: p-value$<$0.05}.}{
		\resizebox{\textwidth}{!}{%
			\begin{tabular}{@{}lcccccccccc@{}}
				\toprule
				& \multicolumn{2}{c}{MNIST} & \multicolumn{2}{c}{CLT} & \multicolumn{2}{c}{CIFAR-10} & \multicolumn{2}{c}{Fashion} & \multicolumn{2}{c}{LFW-Faces}  \\ \midrule

				& Mn$\pm$std & \multicolumn{1}{c|}{Max} & Mn$\pm$std & \multicolumn{1}{c|}{Max} & Mn$\pm$std & \multicolumn{1}{c|}{Max} & Mn$\pm$std & \multicolumn{1}{c|}{Max} & Mn$\pm$std & \multicolumn{1}{c|}{Max}  \\ \midrule

				CONV & 99.69$\pm$2e-4 & \multicolumn{1}{c|}{99.71} & 98.93$\pm$4e-4 & \multicolumn{1}{c|}{99.01} & 91.3$\pm$2e-3 & \multicolumn{1}{c|}{91.71} & 93.95$\pm$1e-4 & \multicolumn{1}{c|}{94.12} & \textbf{96.15$\pm$7e-3} & \multicolumn{1}{c|}{\textbf{97.48}} \\ \midrule
				
				ACONV & \textbf{99.70$\pm$1e-4} & \multicolumn{1}{c|}{{\textbf{99.73}}} & \textbf{99.06$\pm$8e-4} & \multicolumn{1}{c|}{\textbf{99.17}} & \textbf{92.21$\pm$3e-3 }& \multicolumn{1}{c|}{\textbf{92.68}} & \textbf{94.72$\pm$2e-3} & \multicolumn{1}{c|}{\textbf{95.01}} & 94.83$\pm$9e-3 & \multicolumn{1}{c|}{96.06} \\ \midrule

				\small{T-Test (t,p)} & 0.67 & \multicolumn{1}{c|}{0.51} & 2.72 & \multicolumn{1}{c|}{\hl{\textbf{0.02*}}} & 5.12 & \multicolumn{1}{c|}{\hl{\textbf{9e-4*}}} & 7.01 & \multicolumn{1}{c|}{\hl{\textbf{1e-4*}}} & -2.19 & \multicolumn{1}{c|}{0.059} \\ \midrule
				
				Best size & \multicolumn{2}{c|}{5$\times$5} &  \multicolumn{2}{c|}{7$\times$7} & \multicolumn{2}{c|}{5$\times$5} & \multicolumn{2}{c|}{7$\times$7} & \multicolumn{2}{c|}{3$\times$3}  \\ 
				
				\bottomrule
		\end{tabular}}
	}\label{tab:comp:res}
\end{table*}

\subsection{Comparisons in Deep Residual Network (ResNet)}\label{sec:exp_res}
Next, we compared the proposed adaptive kernel (ACONV) to ordinary convolution (CONV) kernels in a modern successful network architecture setup for classification, ResNet \cite{resnet}. For the comparison, we redefined the basic convolutional block to employ either an adaptive or ordinary convolutional layer selectively (see the supplementary figures). We used the same datasets as the previous experiments. To train the networks faster (and avoid local minima) we employed a one-cycle learning rate schedule function that starts the learning rate from 0.001 before rising 0.5$\times\eta_{dset}$ ($\eta_{dset}$: dataset learning rate multiplier) in the first half of the training session and then dropping down to 0.001 again towards the end of the training. We used data augmentation in the Fashion, CIFAR-10 and Faces datasets. Table \ref{simp:net} shows the remaining parameters.

The comparison plots of the validation performances are shown in Figure \ref{fig:comp:res}. By inspecting the plots, we observed clear performance gains from adaptive 5$\times$5 and 7$\times$7 kernels in the MNIST-CLUT, CIFAR-10, and Fashion datasets. It was difficult to identify the best performers in the MNIST and Faces tests from the accuracy plots. Table \ref{tab:comp:res} demonstrates that the peak performances were those of the adaptive convolution (ACONV), with the exception of the Faces dataset. The mean peak validation accuracy differences in MNIST-CLUT, CIFAR-10, and Fashion were statistically significant. Further inspection of the Faces dataset results (see also Figure \ref{fig:comp:res}e), revealed that the highest mean peak accuracy was achieved by the ACONV $5\times5$ network (96.28\%); however, all comparisons were made at the kernel size which achieved the maximum peak accuracy, which was (CONV) $3\times3$ in this case. 

The active convolution model by Jeon \cite{jeon_2017} was also tested on the CIFAR-10 dataset, and test accuracy was reported as 92.46\% (single value) for an active convolution ResNet of 5 blocks and 32 layers. In contrast, we used 3 blocks and 20 layers which produced a maximum accuracy of $92.68\%$ and mean of $92.21\%$.

To explain the differences between the adaptive and ordinary convolutions, we computed the deep Taylor \cite{montavon2017,alber2018innvestigate} decompositions for three examples selected from the MNIST, Fashion and CIFAR-10 validation sets. The decompositions depicted in Figure \ref{fig:comp:res:f} represent the relevancy of individual input pixels back-traced from the network predictions. The heatmaps computed for CONV 3$\times$3 and CONV 7$\times$7 networks show that the larger kernel size network caused smoother and fuzzier relevancy regions. On the flip side, ACONV 7$\times$7 input heatmaps were smoother than CONV 3$\times$3 but sharper than CONV 7$\times$7. Moreover, the CIFAR-10 heatmaps revealed that ACONV 7$\times$7 input representation and focus was even more precise than CONV 3$\times$3.

\begin{figure*}
	\centering     
	\subfloat[MNIST]{\label{fig:comp:res:a}\includegraphics[width=0.5\linewidth]{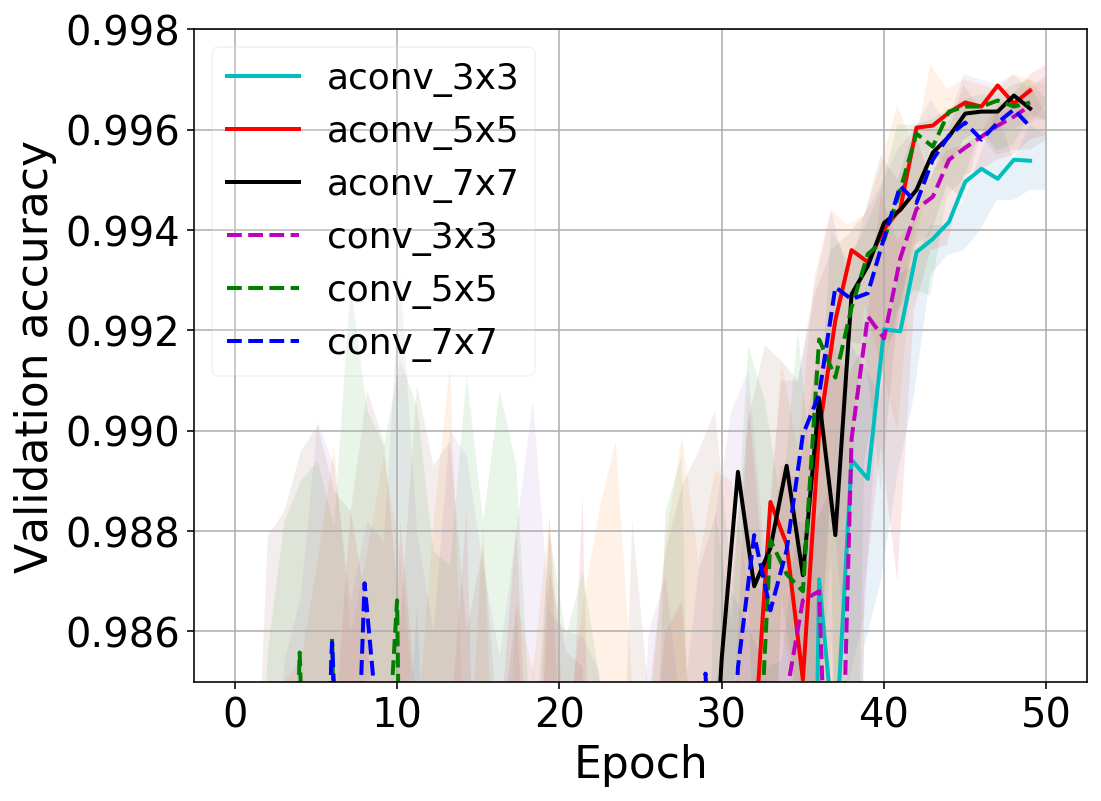}}
	\subfloat[MNIST-CLUT]{\label{fig:comp:res:b}\includegraphics[width=0.5\linewidth]{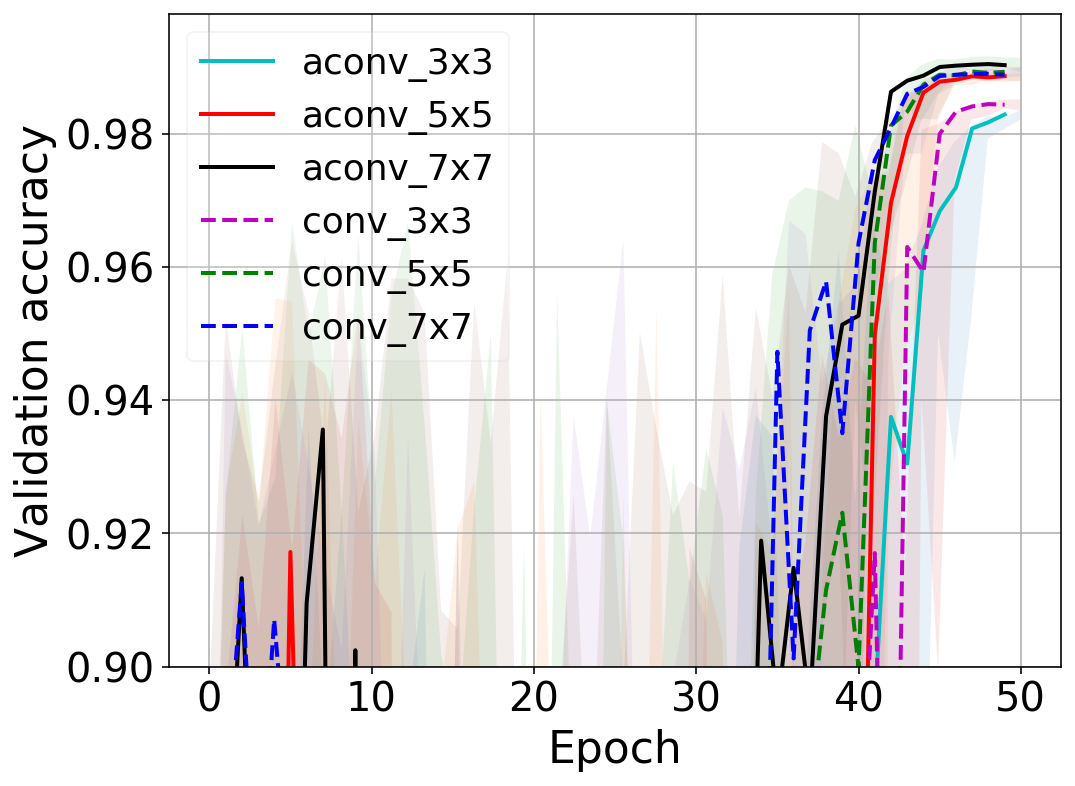}}\\
	\subfloat[CIFAR10]{\label{fig:comp:res:c}\includegraphics[width=0.5\linewidth]{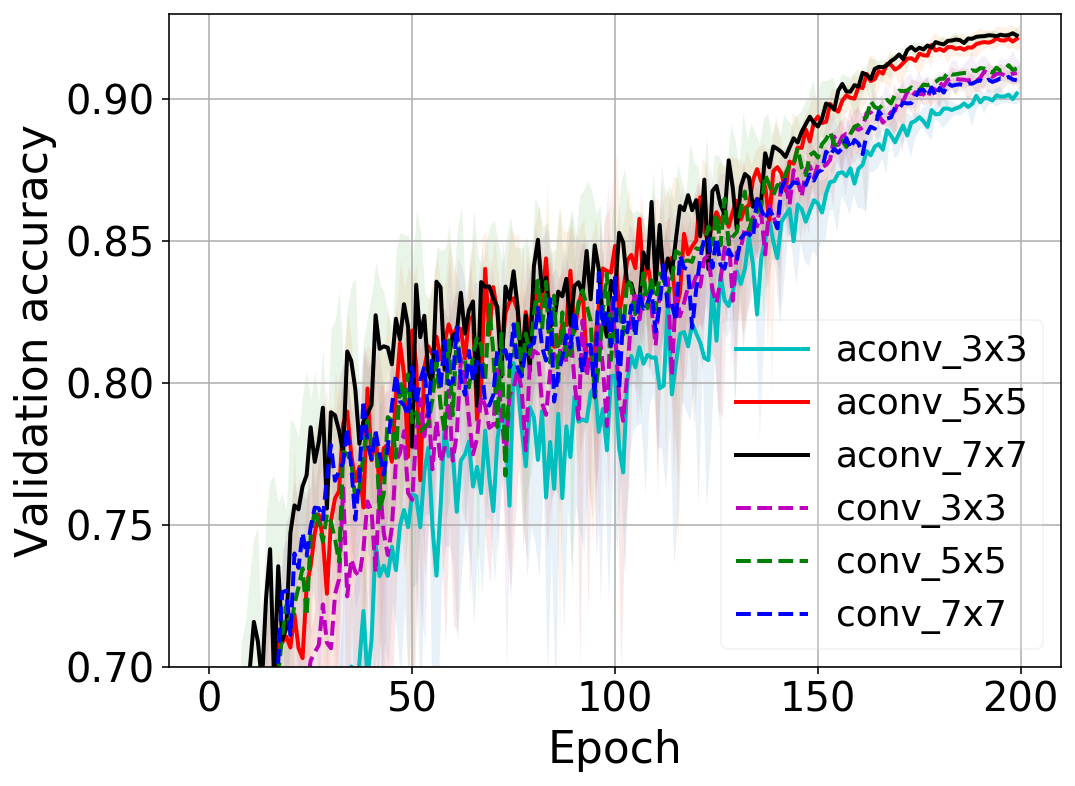}}
	\subfloat[FASHION]{\label{fig:comp:res:d}\includegraphics[width=0.5\linewidth]{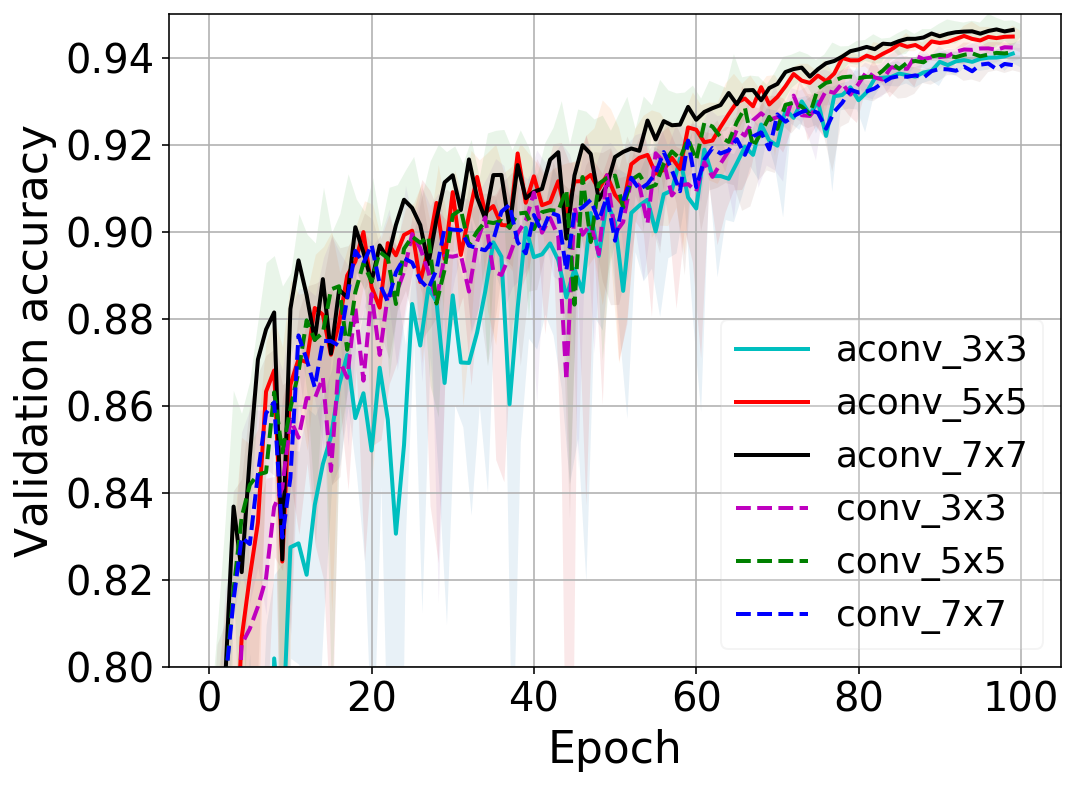}}\\
	\subfloat[FACES]{\label{fig:comp:res:e}\includegraphics[width=0.5\linewidth]{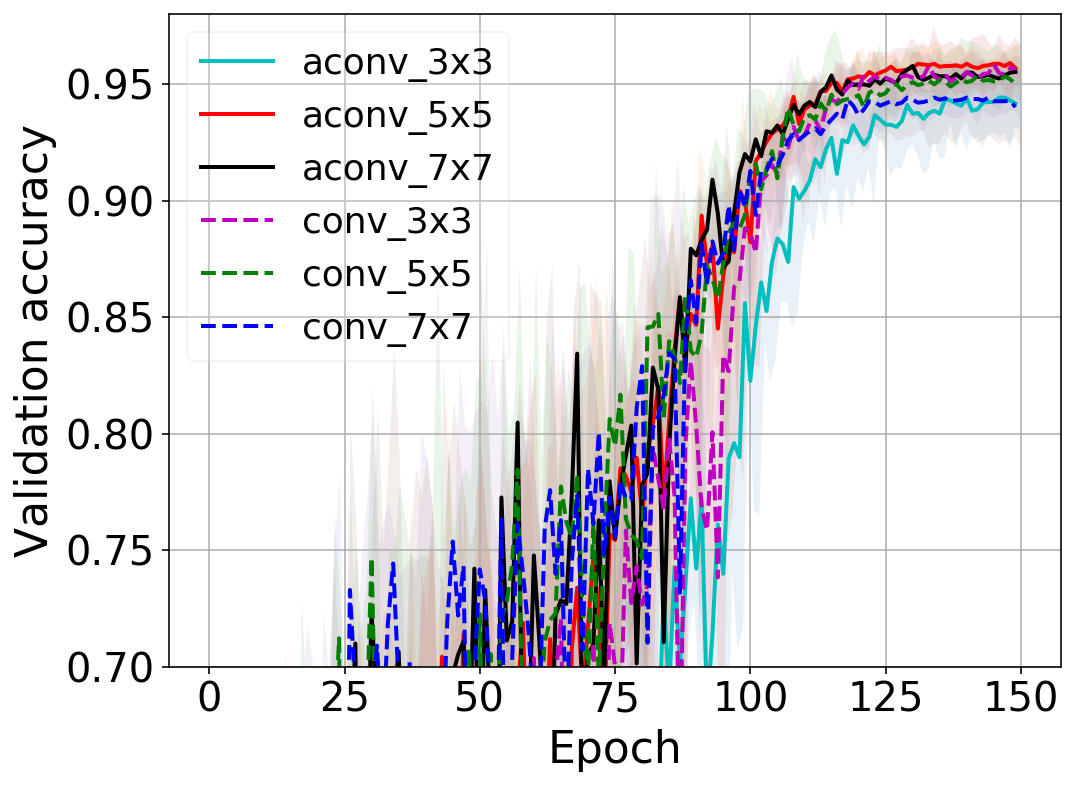}}
	\hfil
	\subfloat[Deep Taylor Decompositions]{\label{fig:comp:res:f}\includegraphics[width=0.4\linewidth]{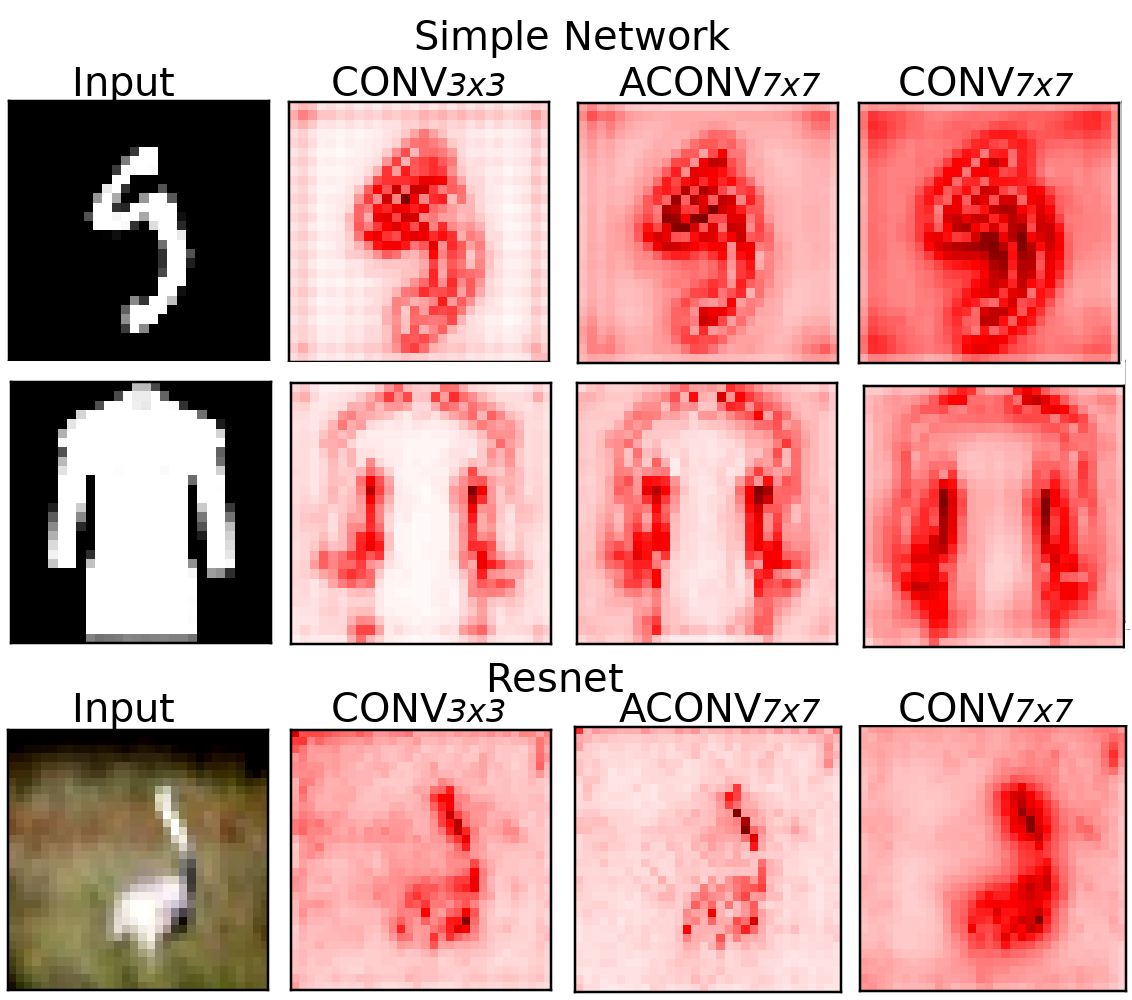}}
	\caption{Experiment 3: Resnet Comparisons. a-e) Validation Accuracy Plots. f) Deep Taylor Decompositions. }
	\label{fig:comp:res}
\end{figure*}

\subsection{Comparisons in U-Net for Segmentation}\label{sec:seg_unet}
The next experiment investigated the performance of the adaptive convolution in a U-shaped network architecture \cite{Ronneberger_15} which allows end-to-end image segmentation. The baseline code collected from Keras library \cite{keras} implements an efficient U-net architecture by using convolution, separable convolution, and deconvolution layers (see the supplementary for the configuration used). We took the first, ordinary convolution layer and replaced it with an adaptive kernel layer, then compared it against the original. The separable convolutions or deconvolution layers in the network were not replaced because they were not implemented in our adaptive kernel framework. The network was setup to learn tri-map segmented outputs of the images in the Oxford Pets-III dataset \cite{parkhi12a}, where the output classes are {pet, border, and background}. The dataset includes 7349 pictures of different dog and cat breeds together with their tri-map segmentation annotations. 

We resized all images to 128$\times$128 and then used the splits provided in the dataset to create training and validation sets of 3680 and 3669 instances. We used sparse categorical entropy loss and Adam optimizer using a batch size of 64 and a fixed learning rate of 0.01. We employed and compared 7$\times$7 adaptive kernels with 7$\times$7 and 3$\times$3 (baseline) ordinary kernels. All three networks were trained with five random initializations. Figure \ref{fig:com:unet:a} and \ref{fig:com:unet:b} plot the mean loss and validation accuracies over 75 training epochs. We observed that the networks started overfitting at shifted iterations ($>\approx$40 epochs). However, the mean training loss value reached by ACONV 7$\times$7 was lower than that of the ordinary kernels. Moreover, the mean peak validation accuracy achieved by ACONV 7$\times$7 (86.21\%$\pm$3e-3) was significantly higher than that of CONV 7$\times$7 (85.54\%$\pm$3e-3) with (p=0.0062), but only marginally higher than the accuracy reached by CONV 3$\times$3 (85.89\%$\pm$3e-3) (p=0.12).

Figures \ref{fig:com:unet:c} through \ref{fig:com:unet:q} compare the segmentation outputs qualitatively. While the output maps look very similar, the adaptive convolution layer network produced slightly more accurate border regions.

\begin{figure*}[t]
	\centering     
	\subfloat[Train Categorical Cross Entropy loss]{\label{fig:com:unet:a}\includegraphics[width=0.5\linewidth]{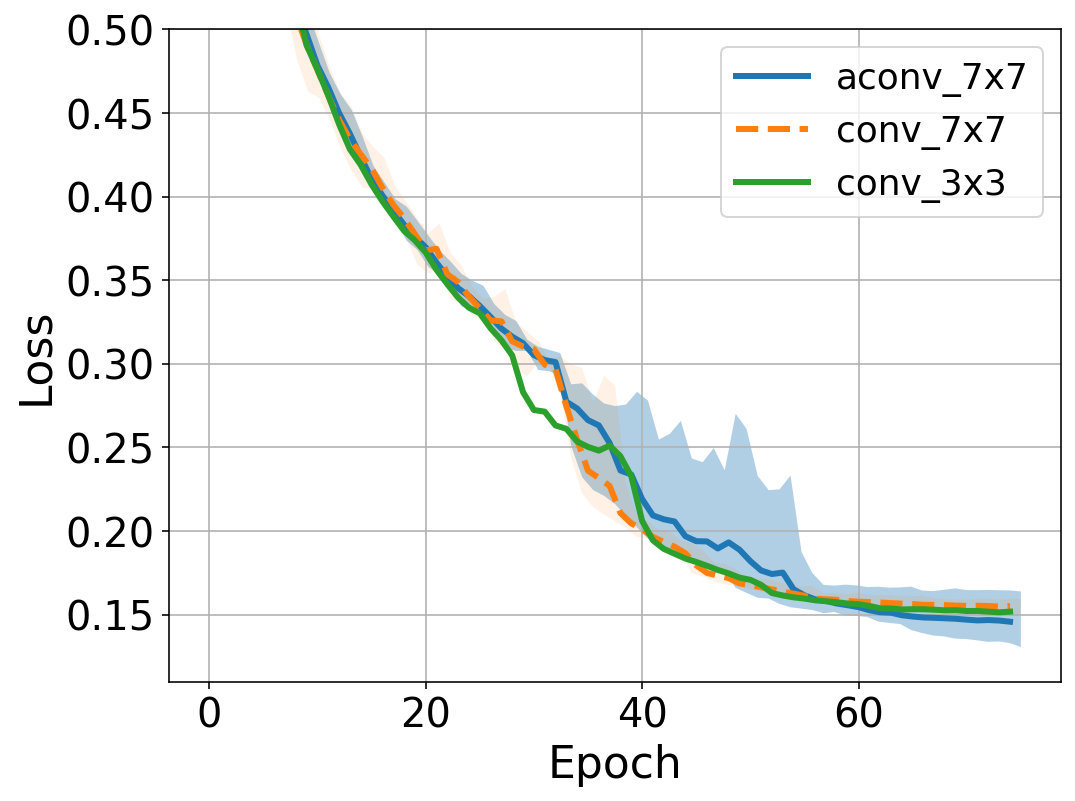}}		
	\subfloat[Validation Accuracy]{\label{fig:com:unet:b}\includegraphics[width=0.5\linewidth]{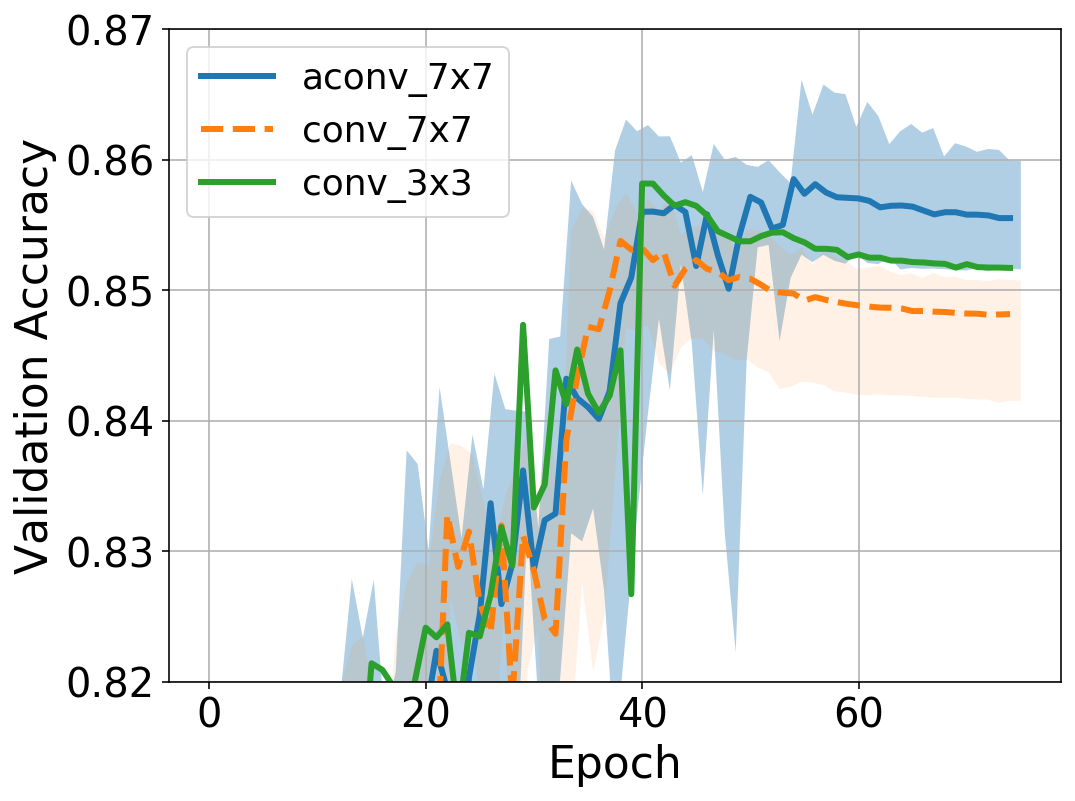}}
	
	\subfloat[Input]{\label{fig:com:unet:c}\includegraphics[width=0.2\linewidth]{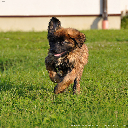}}
	\subfloat[True mask]{\label{fig:com:unet:d}\includegraphics[width=0.2\linewidth]{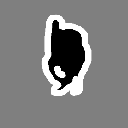}}
	\subfloat[ACONV 7$\times$7]{\label{fig:com:unet:e}\includegraphics[width=0.2\linewidth]{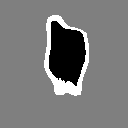}}
	\subfloat[CONV 7$\times$7]{\label{fig:com:unet:f}\includegraphics[width=0.2\linewidth]{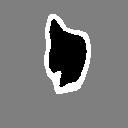}}
	\subfloat[CONV 3$\times$3]{\label{fig:com:unet:g}\includegraphics[width=0.2\linewidth]{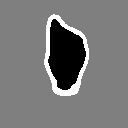}}\\
	
	\subfloat[Input]{\label{fig:com:unet:h}\includegraphics[width=0.2\linewidth]{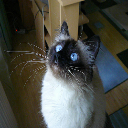}}
	\subfloat[True mask]{\label{fig:com:unet:i}\includegraphics[width=0.2\linewidth]{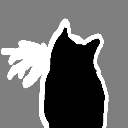}}
	\subfloat[ACONV 7$\times$7]{\label{fig:com:unet:j}\includegraphics[width=0.2\linewidth]{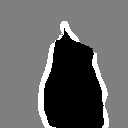}}
	\subfloat[CONV 7$\times$7]{\label{fig:com:unet:k}\includegraphics[width=0.2\linewidth]{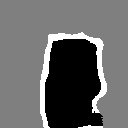}}
	\subfloat[CONV 3$\times$3]{\label{fig:com:unet:l}\includegraphics[width=0.2\linewidth]{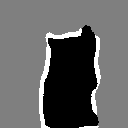}}\\	
	\subfloat[Input]{\label{fig5:a}\includegraphics[width=0.2\linewidth]{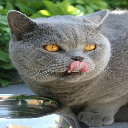}}
	\subfloat[True mask]{\label{fig:com:unet:m}\includegraphics[width=0.2\linewidth]{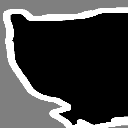}}
	\subfloat[ACONV 7$\times$7]{\label{fig:com:unet:n}\includegraphics[width=0.2\linewidth]{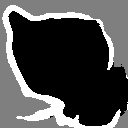}}
	\subfloat[CONV 7$\times$7]{\label{fig:com:unet:p}\includegraphics[width=0.2\linewidth]{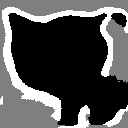}}
	\subfloat[CONV 3$\times$3]{\label{fig:com:unet:q}\includegraphics[width=0.2\linewidth]{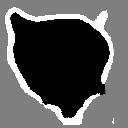}}\\
	\caption{U-Net segmentation comparisons. a-b) Training and validation accuracy plots of U-net with adaptive convolution (ACONV 7$\times$7) and ordinary convolutions (CONV 7$\times$7 and CONV 3$\times$3). c-q) Input, true mask, and predictions of the networks for three different input images from the validation set.}
	\label{fig:com:unet}
\end{figure*}

\subsection{Time Complexity}
The additional complexity of the adaptive kernel is due to the calculation of the envelope function which depends on the kernel size only; it is independent of the input width and height. During training, the envelope function must be calculated for each batch using the current aperture values of individual filters $\sigma_u^q$ while during back-propagation, an extra gradient is calculated for $\sigma_u^q$. In the MNIST training, we recorded the following forward+backward mean batch (128) step-times for different kernel sizes of ACONV, where the time for the ordinary CONV kernel of the same size is given in parentheses: $\lbrace3\times3$: 215us (166us), 5$\times$5: 238us (180us), 7$\times$7: 321us (282us), 9$\times$9: 340us (285us)$\rbrace$, on a laptop equipped with i7-8565U and NVIDIA 1650 GPU. Therefore, the adaptive network was $\approx$1.2 to $\approx$1.3 times slower than an ordinary kernel of the same size in training.  However, we must note that the current implementation was not optimized for speed at all. In addition, in run-time the overhead of the envelope can be removed by using the learned product kernels ($U\circ W$).

\section{Discussions}
The experiments demonstrated the feasibility of the proposed adaptive kernel model. First, the adaptive kernel was able to learn different image processing filters without encountering any difficulty. The learned kernel shapes demonstrated that the envelope and weights were able to co-adapt successfully during the training.

Second, the comparative tests in a simple convolutional network configuration demonstrated the learning and generalization performances on popular image classification datasets. In all datasets, the adaptive kernels provided significant but slight improvements in generalization performance, more than the potential gains that would be achieved by using the ordinary kernels of larger size.

In the ResNet architecture, the use of adaptive kernels resulted in better generalization performance compared to the ordinary kernels in all datasets except Faces, in which the maximum peak validation accuracy was in favor of the ordinary kernel of size $3\times3$. However, we noted that the mean accuracy of the $5\times5$ adaptive kernel was higher. Therefore, we recommend employing $5\times5$ or $7\times7$ adaptive kernels for potential performance gains in ResNet architectures. An additional insight gained from our experiments was that, in contrast to the widely accepted usage of $3\times3$ ordinary convolution kernels, the larger kernels may work better in ResNet for some datasets.

The kernels learned in the Fashion dataset (Figure \ref{fig:simp:comp:f}) verified that the adaptive layers were able to create varying-sized kernels. Furthermore, the deep Taylor decomposition analysis of the compared Resnets (Figure \ref{fig:comp:res:f}) displayed evidence for multi-scale representation computed by the adaptive convolution networks.

In a brief segmentation experiment, we tested the adaptive kernels in an efficient U-net architecture by replacing a single convolution layer, which resulted in an improved segmentation performance against the same (larger) size kernels; however, there was no significant gain compared to smaller 3$\times$3 kernel layer. However, the current state-of-the-art image segmentation methods use more complex architectures and architecture-search algorithms \cite{zhang2020resnest,howard2019searching}. Therefore, it would be appropriate to study adaptive kernels for segmentation in a dedicated study.

In summary, the experiments demonstrated that the adaptive kernel model is an effective alternative to the ordinary convolution kernel. It can create varying-sized kernels in a single layer. It is less prone to overfitting than an ordinary large convolution kernel (5$\times$5, 7$\times$7, 9$\times$9) while providing better or comparable performance to the widely employed 3$\times$3 ordinary kernel.

\section{Conclusion}
In conclusion, we here propose an adaptive convolution kernel which is able to learn its size by training with backpropagation. The new model is standalone, modular and compatible with existing Keras and Tensorflow backends. Hence, one can easily import and attach the proposed adaptive layer into a network and train it with any stride or dilation factor. The single additional requirement is to apply a clip (callback) to the aperture parameter ($\sigma_u$) to keep it above a minimum positive value during training iterations.

There were some limitations to our study, which may be addressed by future work. The current state-of-the-art networks require large resources to set up, tune and optimize on larger datasets. It will be interesting to observe the learning performance of the adaptive kernels in a state-of-the-art network on one of the large datasets. Next, it will be necessary to set up a dedicated segmentation study to compare the adaptive kernel model against the ordinary kernels and other adaptive methods such as deformable or active convolution models.


\section*{Funding}
This work was supported by TUBITAK 1001 programme (no: 118E722), Isik University BAP programme (no: 16A202), and NVIDIA hardware donation of a Tesla K40 GPU unit.

\section*{Acknowledgments}
Earlier implementation and experiments were conducted by İ. Çam; a draft was prepared by İ. Çam, F. B. Tek coded the kernels again in Keras/Tensorflow performed the current experiments, and wrote the current paper. D. Karlı contributed to the mathematical model and proofs. Thanks to Mert Mısırlıoğlu for helping with the ResNet experiment setup. 

\bibliographystyle{model1-num-names.bst}
\bibliography{focusv3}

\begin{thebibliography}{40}
\expandafter\ifx\csname natexlab\endcsname\relax\def\natexlab#1{#1}\fi
\providecommand{\url}[1]{\texttt{#1}}
\providecommand{\href}[2]{#2}
\providecommand{\path}[1]{#1}
\providecommand{\DOIprefix}{doi:}
\providecommand{\ArXivprefix}{arXiv:}
\providecommand{\URLprefix}{URL: }
\providecommand{\Pubmedprefix}{pmid:}
\providecommand{\doi}[1]{\href{http://dx.doi.org/#1}{\path{#1}}}
\providecommand{\Pubmed}[1]{\href{pmid:#1}{\path{#1}}}
\providecommand{\bibinfo}[2]{#2}
\ifx\xfnm\relax \def\xfnm[#1]{\unskip,\space#1}\fi
\bibitem[{Krizhevsky et~al.(2017)Krizhevsky, Sutskever, and
  Hinton}]{Krizhevsky2017}
\bibinfo{author}{A.~Krizhevsky}, \bibinfo{author}{I.~Sutskever},
  \bibinfo{author}{G.~E. Hinton},
\newblock \bibinfo{title}{Imagenet classification with deep convolutional
  neural networks},
\newblock in: \bibinfo{booktitle}{Communications of the ACM},
  \bibinfo{year}{2017}.
\bibitem[{Iandola et~al.(2016)Iandola, Moskewicz, Ashraf, Han, Dally, and
  Keutzer}]{iandola2016}
\bibinfo{author}{F.~N. Iandola}, \bibinfo{author}{M.~W. Moskewicz},
  \bibinfo{author}{K.~Ashraf}, \bibinfo{author}{S.~Han}, \bibinfo{author}{W.~J.
  Dally}, \bibinfo{author}{K.~Keutzer},
\newblock \bibinfo{title}{Squeezenet: Alexnet-level accuracy with 50x fewer
  parameters and $<$1mb model size},
\newblock \bibinfo{journal}{arXiv} \bibinfo{volume}{abs/1602.07360}
  (\bibinfo{year}{2016}).
\bibitem[{He et~al.(2015)He, Zhang, Ren, and Sun}]{resnet}
\bibinfo{author}{K.~He}, \bibinfo{author}{X.~Zhang}, \bibinfo{author}{S.~Ren},
  \bibinfo{author}{J.~Sun},
\newblock \bibinfo{title}{Deep residual learning for image recognition},
\newblock in: \bibinfo{booktitle}{IEEE CVPR}, \bibinfo{year}{2015}.
\bibitem[{Poggio and Serre(2013)}]{Poggio_2013}
\bibinfo{author}{T.~Poggio}, \bibinfo{author}{T.~Serre},
\newblock \bibinfo{title}{{M}odels of visual cortex},
\newblock \bibinfo{journal}{Scholarpedia} \bibinfo{volume}{8}
  (\bibinfo{year}{2013}) \bibinfo{pages}{3516}.
\bibitem[{Hubel(1962)}]{Hubel_1962}
\bibinfo{author}{D.~. T.~W. Hubel},
\newblock \bibinfo{title}{Receptive fields, binocular interaction and
  functional architecture in the cat’s visual cortex.},
\newblock \bibinfo{journal}{J Physiology} \bibinfo{volume}{160}
  (\bibinfo{year}{1962}) \bibinfo{pages}{106--54}.
\bibitem[{LeCun et~al.(1998)LeCun, Bottou, Bengio, and Haffner}]{lecun1998a}
\bibinfo{author}{Y.~LeCun}, \bibinfo{author}{L.~Bottou},
  \bibinfo{author}{Y.~Bengio}, \bibinfo{author}{P.~Haffner},
\newblock \bibinfo{title}{Gradient-based learning applied to document
  recognition},
\newblock in: \bibinfo{booktitle}{Proc. of the IEEE},
  volume~\bibinfo{volume}{86}, \bibinfo{year}{1998}, pp.
  \bibinfo{pages}{2278--2324}.
\bibitem[{GoodFellow et~al.(2016)GoodFellow, Bengio, and
  Courville}]{goodfellow2016}
\bibinfo{author}{I.~GoodFellow}, \bibinfo{author}{Y.~Bengio},
  \bibinfo{author}{A.~Courville}, \bibinfo{title}{Deep Learning},
  \bibinfo{publisher}{The MIT Press}, \bibinfo{year}{2016}.
\bibitem[{Li et~al.(2017)Li, Li, Fern, and Raich}]{li_2017}
\bibinfo{author}{X.~Li}, \bibinfo{author}{F.~Li}, \bibinfo{author}{X.~Fern},
  \bibinfo{author}{R.~Raich},
\newblock \bibinfo{title}{Filter shaping for convolutional neural networks},
\newblock in: \bibinfo{booktitle}{ICLR}, \bibinfo{year}{2017}.
\bibitem[{Dai et~al.(2017)Dai, Qi, Xiong, Li, Zhang, Hu, and Wei}]{dai_2017}
\bibinfo{author}{J.~Dai}, \bibinfo{author}{H.~Qi}, \bibinfo{author}{Y.~Xiong},
  \bibinfo{author}{Y.~Li}, \bibinfo{author}{G.~Zhang}, \bibinfo{author}{H.~Hu},
  \bibinfo{author}{Y.~Wei},
\newblock \bibinfo{title}{Deformable convolutional networks},
\newblock in: \bibinfo{booktitle}{ICLR}, \bibinfo{year}{2017}.
\bibitem[{Ronneberger et~al.(2015)Ronneberger, Fischer, and
  Brox}]{Ronneberger_15}
\bibinfo{author}{O.~Ronneberger}, \bibinfo{author}{P.~Fischer},
  \bibinfo{author}{T.~Brox},
\newblock \bibinfo{title}{U-net: Convolutional networks for biomedical image
  segmentation},
\newblock in: \bibinfo{booktitle}{MICCAI}, \bibinfo{year}{2015}, pp.
  \bibinfo{pages}{18--29}.
\bibitem[{Jeon and Kim(2017)}]{jeon_2017}
\bibinfo{author}{Y.~Jeon}, \bibinfo{author}{J.~Kim},
\newblock \bibinfo{title}{Active convolution: Learning the shape},
\newblock in: \bibinfo{booktitle}{IEEE CVPR}, \bibinfo{year}{2017}.
\bibitem[{Luo et~al.(2016)Luo, Li, Urtasun, and Zemel}]{luo2016}
\bibinfo{author}{W.~Luo}, \bibinfo{author}{Y.~Li},
  \bibinfo{author}{R.~Urtasun}, \bibinfo{author}{R.~Zemel},
\newblock \bibinfo{title}{Understanding the effective receptive field in deep
  convolutional neural networks},
\newblock in: \bibinfo{booktitle}{Adv. in Neural Information Processing
  Systems}, \bibinfo{year}{2016}, pp. \bibinfo{pages}{4898--4906}.
\bibitem[{Holschneider et~al.(1989)Holschneider, Kronland-Martinet, Morlet, and
  Tchamitchian}]{Holschneider_89}
\bibinfo{author}{M.~Holschneider}, \bibinfo{author}{R.~Kronland-Martinet},
  \bibinfo{author}{J.~Morlet}, \bibinfo{author}{P.~Tchamitchian},
\newblock \bibinfo{title}{A real-time algorithm for signal analysis with the
  help of the wavelet transform},
\newblock \bibinfo{journal}{Wavelets: Time-Frequency Methods and Phase Space}
  (\bibinfo{year}{1989}) \bibinfo{pages}{289--297}.
\bibitem[{Yu et~al.(2017)Yu, Koltun, and Funkhouser}]{yu_2017}
\bibinfo{author}{F.~Yu}, \bibinfo{author}{V.~Koltun},
  \bibinfo{author}{T.~Funkhouser},
\newblock \bibinfo{title}{Dilated residual networks},
\newblock in: \bibinfo{booktitle}{IEEE CVPR}, \bibinfo{year}{2017}.
\bibitem[{{Chen} et~al.(2018){Chen}, {Papandreou}, {Kokkinos}, {Murphy}, and
  {Yuille}}]{Chen_2017}
\bibinfo{author}{L.~{Chen}}, \bibinfo{author}{G.~{Papandreou}},
  \bibinfo{author}{I.~{Kokkinos}}, \bibinfo{author}{K.~{Murphy}},
  \bibinfo{author}{A.~L. {Yuille}},
\newblock \bibinfo{title}{Deeplab: Semantic image segmentation with deep
  convolutional nets, atrous convolution, and fully connected crfs},
\newblock \bibinfo{journal}{IEEE Transactions on Pattern Analysis and Machine
  Intelligence} \bibinfo{volume}{40} (\bibinfo{year}{2018})
  \bibinfo{pages}{834--848}.
\bibitem[{Oord et~al.(2016)Oord, Dieleman, Zen, Simonyan, Vinyals, Graves,
  Kalchbrenner, Senior, and Kavukcuoglu}]{Oord_16}
\bibinfo{author}{A.~Oord}, \bibinfo{author}{S.~Dieleman},
  \bibinfo{author}{H.~Zen}, \bibinfo{author}{K.~Simonyan},
  \bibinfo{author}{O.~Vinyals}, \bibinfo{author}{A.~Graves},
  \bibinfo{author}{N.~Kalchbrenner}, \bibinfo{author}{A.~Senior},
  \bibinfo{author}{K.~Kavukcuoglu},
\newblock \bibinfo{title}{Wavenet: a generative model for raw audio},
\newblock \bibinfo{journal}{ArXiv, Corr abs/1609.03499}
  (\bibinfo{year}{2016}).
\bibitem[{Tian et~al.(2020)Tian, Xu, and Zuo}]{tian2020}
\bibinfo{author}{C.~Tian}, \bibinfo{author}{Y.~Xu}, \bibinfo{author}{W.~Zuo},
\newblock \bibinfo{title}{Image denoising using deep cnn with batch
  renormalization},
\newblock \bibinfo{journal}{Neural Networks} \bibinfo{volume}{121}
  (\bibinfo{year}{2020}) \bibinfo{pages}{461 -- 473}.
\bibitem[{Yu and Koltun(2016)}]{yu_2016}
\bibinfo{author}{F.~Yu}, \bibinfo{author}{V.~Koltun},
\newblock \bibinfo{title}{Multi-scale context aggregation by dilated
  convolutions},
\newblock in: \bibinfo{booktitle}{ICLR}, \bibinfo{year}{2016}.
\bibitem[{Guo et~al.(2020)Guo, Song, Dong, Yan, Tu, and Zhu}]{guo_2020}
\bibinfo{author}{B.~Guo}, \bibinfo{author}{K.~Song}, \bibinfo{author}{H.~Dong},
  \bibinfo{author}{Y.~Yan}, \bibinfo{author}{Z.~Tu}, \bibinfo{author}{L.~Zhu},
\newblock \bibinfo{title}{Nernet: Noise estimation and removal network for
  image denoising},
\newblock \bibinfo{journal}{J. Vis. Commun. Image R.} \bibinfo{volume}{71}
  (\bibinfo{year}{2020}) \bibinfo{pages}{102851}.
\bibitem[{Li et~al.(2019)Li, Qi, Shi, and Lin}]{li_2019}
\bibinfo{author}{H.~Li}, \bibinfo{author}{F.~Qi}, \bibinfo{author}{G.~Shi},
  \bibinfo{author}{C.~Lin},
\newblock \bibinfo{title}{A multiscale dilated dense convolutional network for
  saliency prediction with instance-level attention competition},
\newblock \bibinfo{journal}{J. Vis. Commun. Image R.} \bibinfo{volume}{64}
  (\bibinfo{year}{2019}) \bibinfo{pages}{102611}.
\bibitem[{Szegedy et~al.(2017)Szegedy, Ioffe, Vanhoucke, and
  Alemi}]{szegedy2016}
\bibinfo{author}{C.~Szegedy}, \bibinfo{author}{S.~Ioffe},
  \bibinfo{author}{V.~Vanhoucke}, \bibinfo{author}{A.~Alemi},
\newblock \bibinfo{title}{Inception-v4, inception-resnet and the impact of
  residual connections on learning},
\newblock in: \bibinfo{booktitle}{Conf. on Artificial Intelligence},
  \bibinfo{year}{2017}.
\bibitem[{Weiler et~al.(2018)Weiler, Hamprecht, and Storath}]{Weiler_2017}
\bibinfo{author}{M.~Weiler}, \bibinfo{author}{F.~A. Hamprecht},
  \bibinfo{author}{M.~Storath},
\newblock \bibinfo{title}{Learning steerable filters for rotation equivariant
  cnns},
\newblock in: \bibinfo{booktitle}{IEEE CVPR}, \bibinfo{year}{2018}.
\bibitem[{Tek(2019)}]{tek_2019}
\bibinfo{author}{F.~B. Tek},
\newblock \bibinfo{title}{Uyarlanır yerel bağlı nöron modelinin
  İncelemesi},
\newblock \bibinfo{journal}{Bilişim Teknolojileri Dergisi}
  \bibinfo{volume}{12} (\bibinfo{year}{2019}) \bibinfo{pages}{307--317}.
\bibitem[{Tek(2020)}]{tek_2018}
\bibinfo{author}{F.~B. Tek},
\newblock \bibinfo{title}{Adaptive locally connected neural network},
\newblock \bibinfo{journal}{Neurocomputing, Online, In Press}
  (\bibinfo{year}{2020}).
\bibitem[{Lindeberg(2011)}]{Lindeberg2011}
\bibinfo{author}{T.~Lindeberg},
\newblock \bibinfo{title}{Generalized {G}aussian scale-space axiomatics
  comprising linear scale-space, affine scale-space and spatio-temporal
  scale-space},
\newblock \bibinfo{journal}{Journal of Mathematical Imaging and Vision}
  \bibinfo{volume}{40} (\bibinfo{year}{2011}) \bibinfo{pages}{36--81}.
\bibitem[{İlker Çam(2019)}]{ilker_tez}
\bibinfo{author}{İlker Çam}, \bibinfo{title}{Learning Filter Scale and
  Orientation In Convolution Neural Networks}, \bibinfo{type}{M. sc.}, Isik
  University, \bibinfo{year}{2019}.
\bibitem[{Cam and Tek(2018)}]{cam_2018}
\bibinfo{author}{I.~Cam}, \bibinfo{author}{F.~B. Tek},
\newblock \bibinfo{title}{Learning filter scale and orientation in cnns},
\newblock \bibinfo{journal}{arXiv preprint arXiv:1803.00388}
  (\bibinfo{year}{2018}).
\bibitem[{He et~al.(2015)He, Zhang, Ren, and Sun}]{he2015}
\bibinfo{author}{K.~He}, \bibinfo{author}{X.~Zhang}, \bibinfo{author}{S.~Ren},
  \bibinfo{author}{J.~Sun},
\newblock \bibinfo{title}{Delving deep into rectifiers: Surpassing human-level
  performance on imagenet classification},
\newblock in: \bibinfo{booktitle}{ICCV}, \bibinfo{year}{2015}, pp.
  \bibinfo{pages}{1026--1034}.
\bibitem[{Glorot and Bengio(2010)}]{glorot2010}
\bibinfo{author}{X.~Glorot}, \bibinfo{author}{Y.~Bengio},
\newblock \bibinfo{title}{Understanding the difficulty of training deep
  feedforward neural networks},
\newblock in: \bibinfo{booktitle}{Proc. of Machine Learning Research},
  volume~\bibinfo{volume}{9}, \bibinfo{year}{2010}, pp.
  \bibinfo{pages}{249--256}.
\bibitem[{Chollet et~al.(2015)}]{keras}
\bibinfo{author}{F.~Chollet}, et~al., \bibinfo{title}{Keras},
  \bibinfo{howpublished}{\url{https://keras.io}}, \bibinfo{year}{2015}.
\bibitem[{Tek(2020)}]{gitcodeacnn}
\bibinfo{author}{F.~B. Tek}, \bibinfo{year}{2020}. \URLprefix
  \url{https://github.com/btekgit/AdaptiveCNN}.
\bibitem[{Jaderberg et~al.(2015)Jaderberg, Simonyan, Zisserman, and
  Kavukcuoglu}]{jaderberg2015}
\bibinfo{author}{M.~Jaderberg}, \bibinfo{author}{K.~Simonyan},
  \bibinfo{author}{A.~Zisserman}, \bibinfo{author}{K.~Kavukcuoglu},
\newblock \bibinfo{title}{Spatial transformer networks},
\newblock in: \bibinfo{booktitle}{NeurIPS}, volume~\bibinfo{volume}{28},
  \bibinfo{year}{2015}.
\bibitem[{Krizhevsky(2009)}]{cifar10}
\bibinfo{author}{A.~Krizhevsky}, \bibinfo{title}{Learning Multiple Layers of
  Features from Tiny Images}, \bibinfo{type}{Technical Report}, Canadian
  Institute For Advanced Research, \bibinfo{year}{2009}.
\bibitem[{Xiao et~al.(2017)Xiao, Rasul, and Vollgraf}]{xiao2017}
\bibinfo{author}{H.~Xiao}, \bibinfo{author}{K.~Rasul},
  \bibinfo{author}{R.~Vollgraf},
\newblock \bibinfo{title}{Fashion-mnist: a novel image dataset for benchmarking
  machine learning algorithms},
\newblock \bibinfo{journal}{arXiv} \bibinfo{volume}{cs.LG/1708.07747}
  (\bibinfo{year}{2017}).
\bibitem[{Huang et~al.(2007)Huang, Ramesh, Berg, and Learned-Miller}]{lfw}
\bibinfo{author}{G.~B. Huang}, \bibinfo{author}{M.~Ramesh},
  \bibinfo{author}{T.~Berg}, \bibinfo{author}{E.~Learned-Miller},
  \bibinfo{title}{Labeled Faces in the Wild: A Database for Studying Face
  Recognition in Unconstrained Environments}, \bibinfo{type}{Technical Report}
  \bibinfo{number}{07-49}, University of Massachusetts, Amherst,
  \bibinfo{year}{2007}.
\bibitem[{Montavon et~al.(2017)Montavon, Lapuschkin, Binder, Samek, and
  Müller}]{montavon2017}
\bibinfo{author}{G.~Montavon}, \bibinfo{author}{S.~Lapuschkin},
  \bibinfo{author}{A.~Binder}, \bibinfo{author}{W.~Samek},
  \bibinfo{author}{K.-R. Müller},
\newblock \bibinfo{title}{Explaining nonlinear classification decisions with
  deep taylor decomposition},
\newblock \bibinfo{journal}{Pattern Recognition} \bibinfo{volume}{65}
  (\bibinfo{year}{2017}) \bibinfo{pages}{211 -- 222}.
\bibitem[{Alber et~al.(2018)Alber, Lapuschkin, Seegerer, Hägele, Schütt,
  Montavon, Samek, Müller, Dähne, and Kindermans}]{alber2018innvestigate}
\bibinfo{author}{M.~Alber}, \bibinfo{author}{S.~Lapuschkin},
  \bibinfo{author}{P.~Seegerer}, \bibinfo{author}{M.~Hägele},
  \bibinfo{author}{K.~T. Schütt}, \bibinfo{author}{G.~Montavon},
  \bibinfo{author}{W.~Samek}, \bibinfo{author}{K.-R. Müller},
  \bibinfo{author}{S.~Dähne}, \bibinfo{author}{P.-J. Kindermans},
  \bibinfo{title}{innvestigate neural networks!}, \bibinfo{year}{2018}.
\bibitem[{Parkhi et~al.(2012)Parkhi, Vedaldi, Zisserman, and
  Jawahar}]{parkhi12a}
\bibinfo{author}{O.~M. Parkhi}, \bibinfo{author}{A.~Vedaldi},
  \bibinfo{author}{A.~Zisserman}, \bibinfo{author}{C.~V. Jawahar},
\newblock \bibinfo{title}{Cats and dogs},
\newblock in: \bibinfo{booktitle}{IEEE CVPR}, \bibinfo{year}{2012}.
\bibitem[{Zhang et~al.(2020)Zhang, Wu, Zhang, Zhu, Zhang, Lin, Sun, He,
  Mueller, Manmatha, Li, and Smola}]{zhang2020resnest}
\bibinfo{author}{H.~Zhang}, \bibinfo{author}{C.~Wu},
  \bibinfo{author}{Z.~Zhang}, \bibinfo{author}{Y.~Zhu},
  \bibinfo{author}{Z.~Zhang}, \bibinfo{author}{H.~Lin},
  \bibinfo{author}{Y.~Sun}, \bibinfo{author}{T.~He},
  \bibinfo{author}{J.~Mueller}, \bibinfo{author}{R.~Manmatha},
  \bibinfo{author}{M.~Li}, \bibinfo{author}{A.~Smola},
\newblock \bibinfo{title}{Resnest: Split-attention networks},
\newblock \bibinfo{journal}{arXiv, Corr} \bibinfo{volume}{2004.08955}
  (\bibinfo{year}{2020}).
\bibitem[{Howard et~al.(2019)Howard, Sandler, Chu, Chen, Chen, Tan, Wang, Zhu,
  Pang, Vasudevan, Le, and Adam}]{howard2019searching}
\bibinfo{author}{A.~Howard}, \bibinfo{author}{M.~Sandler},
  \bibinfo{author}{G.~Chu}, \bibinfo{author}{L.-C. Chen},
  \bibinfo{author}{B.~Chen}, \bibinfo{author}{M.~Tan},
  \bibinfo{author}{W.~Wang}, \bibinfo{author}{Y.~Zhu},
  \bibinfo{author}{R.~Pang}, \bibinfo{author}{V.~Vasudevan},
  \bibinfo{author}{Q.~V. Le}, \bibinfo{author}{H.~Adam},
\newblock \bibinfo{title}{Searching for mobilenetv3},
\newblock \bibinfo{journal}{arXiv, Corr} \bibinfo{volume}{1905.02244}
  (\bibinfo{year}{2019}).

\end{thebibliography}

\appendix 

\section{Derivation of mean of the variances of the weight derivatives}\label{apx_w_init}
We assume that the inputs and weights are i.i.d (independent and identically distributed) and the expected values are zero, i.e.\ $\mathbb{E}({x}_{(i,j)})=0$ and $\mathbb{E}(w_{k_l})=0$. Let us recall the expression for the mean of variances,
\begin{equation}
\MVar \bigg(\frac{\partial E}{\partial W} \bigg)  =  \frac{1}{n^2} \sum_{k,l}^{\nn}\Var \bigg({\frac{\partial E}{\partial w_{k,l}}}\bigg)
\end{equation}
Here, the expected value of the derivative is also zero $\mathbb{E} \bigg(\frac{\partial E}{\partial w_{k,l}} \bigg) = 0 $ by the independence of variables and $\mathbb{E}({x}_{i,j})=0$. Hence:
\begin{equation}
\Var\big( \frac{\partial E}{\partial w_{k,l}} \big) = \mathbb{E}\bigg( \big[\frac{\partial E}{\partial w_{k,l}} \big]^2\bigg) = 
\mathbb{E} \Bigg((u_{k,l}^2) \bigg[\sum_{i,j}^{M,N}\frac{\partial E}{\partial {o}_{i,j}} x_{i+j,k+l}\bigg]^2\Bigg)
\end{equation}

By independence, the last line equals: 
\begin{multline}
=\mathbb{E}(u_{k,l}^2) \Bigg[\sum_{i,j}^{M,N} \mathbb{E} \bigg( \bigg[\frac{\partial E}{\partial {o}_{i,j}} x_{i+j,k+l}\bigg]^2 \bigg) \\+ 
2\;\sum_{i,j}^{M,N} \; \sum_{p,r}^{i-1,j-1} \mathbb{E} \bigg( \frac{\partial E}{\partial {o}_{i,j}}
\frac{\partial E}{\partial {o}_{p,r}} x_{i+k,j+l} \; x_{i+p,k+r} \bigg)
\Bigg]
\end{multline}
The second term is zero since $\mathbb{E}({x}_{i,j})=0$ and independence of input $x_{i,j}$ from the other variables. Then, 
\begin{equation}
\Var\big( \frac{\partial E}{\partial w_{k,l}} \big) =
\mathbb{E}(u_{k,l}^2) \; \sum_{i,j}^{M,N} \mathbb{E}(x_{i+k,j+l}^2) \; \mathbb{E} \Big( \big[\frac{\partial E}{\partial {o}_{i,j}}\big]^2 \Big)
\end{equation}
Since $x_{i,j}$ are i.i.d with variance $\sigma_x^2$ and expectation zero, we can write 
\begin{equation}
\Var\big( \frac{\partial E}{\partial w_{k,l}} \big) =
\sigma_x^2 \; \mathbb{E}(u_{k,l}^2) \; \sum_{i,j}^{M,N} \mathbb{E} \Big( \big[\frac{\partial E}{\partial {o}_{i,j}}\big]^2 \Big)
\end{equation}
Then the mean of variances is as follows:
\begin{equation}
\MVar\big( \frac{\partial E}{\partial W} \big) =  \sigma_x^2 \bigg[ \frac{1}{n^2} \sum_{k,l}^{\nn}\mathbb{E}(u_{k,l}^2) \bigg] \sum_{i,j}^{M,N} \mathbb{E} \bigg[ \big(\frac{\partial E}{\partial {o}_{i,j}} \big)^2 \bigg]
\end{equation}

\end{document}